% ----------------------------------------------
\documentclass{article} % For LaTeX2e
\usepackage{iclr2025_conference,times}
% ----------------------------------------------
% There will be a strict upper limit of 9 pages for the main text of the submission
%%%%% NEW MATH DEFINITIONS %%%%%

\usepackage{amsmath,amsfonts,bm}

% Mark sections of captions for referring to divisions of figures

% Highlight a newly defined term

% Figure reference, lower-case.

% Figure reference, capital. For start of sentence

% Section reference, lower-case.

% Section reference, capital.

% Reference to two sections.

% Reference to three sections.

% Reference to an equation, lower-case.
\def\eqref#1{equation~\ref{#1}}
% Reference to an equation, upper case

% A raw reference to an equation---avoid using if possible

% Reference to a chapter, lower-case.

% Reference to an equation, upper case.

% Reference to a range of chapters

% Reference to an algorithm, lower-case.

% Reference to an algorithm, upper case.

% Reference to a part, lower case

% Reference to a part, upper case

\def\1{\bm{1}}

% Random variables

% rm is already a command, just don't name any random variables m

% Random vectors

% Elements of random vectors

% Random matrices

% Elements of random matrices

% Vectors

% Elements of vectors

% Matrix

% Tensor
\DeclareMathAlphabet{\mathsfit}{\encodingdefault}{\sfdefault}{m}{sl}
\SetMathAlphabet{\mathsfit}{bold}{\encodingdefault}{\sfdefault}{bx}{n}

% Graph

% Sets

% Don't use a set called E, because this would be the same as our symbol
% for expectation.

% Entries of a matrix

% entries of a tensor
% Same font as tensor, without \bm wrapper

% The true underlying data generating distribution

% The empirical distribution defined by the training set

% The model distribution

% Stochastic autoencoder distributions

 % Laplace distribution

% Wolfram Mathworld says $L^2$ is for function spaces and $\ell^2$ is for vectors
% But then they seem to use $L^2$ for vectors throughout the site, and so does
% wikipedia.

 % See usage in notation.tex. Chosen to match Daphne's book.

\definecolor{linkColor}{rgb}{0.18,0.39,0.62}
\usepackage[pagebackref=true,breaklinks=true,colorlinks,citecolor=linkColor]{hyperref}
\usepackage{url}
\newcommand{\myparagraph}[1]{\vspace{0.1em}\noindent\textbf{#1}}
% ----------------------------------------------
\newcommand{\ie}{\textit{i}.\textit{e}.}
\newcommand{\eg}{\textit{e}.\textit{g}.}

\usepackage{pifont}
\usepackage{arydshln}
\newcommand{\cmark}{\ding{51}}
\newcommand{\xmark}{\ding{55}} 

\usepackage{bm,amsmath}
\usepackage{arydshln}
\usepackage{multirow}
\usepackage{graphicx}
\usepackage{color}
\usepackage{xcolor}
\usepackage{colortbl}
\definecolor{ours}{gray}{.95}
\usepackage{wrapfig}
\usepackage{amsthm}
\newtheorem{theorem}{Theorem}[section]
\newtheorem{lemma}[theorem]{Lemma}

\usepackage{graphicx}
\usepackage{pythonhighlight}
%----------------------------------------------------
\title{Memory Efficient Transformer Adapter for Dense Predictions}
% ----------------------------------------------
\author{Dong Zhang$^{1,2}$, Rui Yan$^{3}$, Pingcheng Dong$^{1}$, Kwang-Ting Cheng$^{1}$\\
$^1$The Hong Kong University of Science and Technology\\
$^2$InnoHK AI Chip Center for Smart Emerging Systems,~$^3$Nanjing University\\
\small \texttt{\{dongz,timcheng\}@ust.hk;ruiyan@nju.edu.cn;pingcheng.dong@connect.ust.hk}
}
% ----------------------------------------------

% ----------------------------------------------
\iclrfinalcopy % Uncomment for camera-ready version, but NOT for submission.
\begin{document}
% ----------------------------------------------
\maketitle
% ----------------------------------------------
\begin{abstract}
While current Vision Transformer (ViT) adapter methods have shown promising accuracy, their inference speed is implicitly hindered by inefficient memory access operations, \eg, standard normalization and frequent reshaping. In this work, we propose META, a simple and fast ViT adapter that can improve the model's memory efficiency and decrease memory time consumption by reducing the inefficient memory access operations. Our method features a memory-efficient adapter block that enables the common sharing of layer normalization between the self-attention and feed-forward network layers, thereby reducing the model's reliance on \emph{normalization operations}. Within the proposed block, the cross-shaped self-attention is employed to reduce the model's frequent \emph{reshaping operations}. Moreover, we augment the adapter block with a lightweight convolutional branch that can enhance local inductive biases, particularly beneficial for the dense prediction tasks, \eg, object detection, instance segmentation, and semantic segmentation. The adapter block is finally formulated in a cascaded manner to compute diverse head features, thereby enriching the variety of feature representations. Empirically, extensive evaluations on multiple representative datasets validate that META substantially enhances the predicted quality, while achieving a new state-of-the-art accuracy-efficiency trade-off. Theoretically, we demonstrate that META exhibits superior generalization capability and stronger adaptability.
\end{abstract}
% ---------------------------------------------------
% \keywords{Vision Transformer, Vision Adapter, Memory Efficient Learning, Dense Prediction Tasks, Segmentation}
% ----------------------------------------------
% -----------------------------------
\section{Introduction}
\label{sec:intro}
% -----------------------------------
State-of-the-art computer vision models typically follow the intuitive paradigm of pre-training on large single-modal general datasets, followed by fine-tuning on local task-specific datasets to achieve promising accuracy~\citep{chen2021pre,zhang2024boundary,radford2021learning,he2022masked,zhang2023cae}. 
% For example, in the semantic segmentation (SSeg) models, pre-trained backbone parameters based on ImageNet~\citep{deng2009imagenet} or MS-COCO~\citep{lin2014microsoft} are commonly used for downstream model initialization~\citep{he2016deep,xie2021segformer,liu2021swin,zhang2022deep}. 
However, this defacto paradigm requires downstream models to load the entire pre-trained model for fine-tuning, which may result in several undesirable drawbacks such as poor structural flexibility, large optimization gaps, and limited applicability in certain scenarios~\citep{jie2023fact,du2022survey,jie2022convolutional}. More importantly, these drawbacks have become more prominent and urgent problems that need to be addressed for dense prediction tasks, particularly in the current context of significant model size growth~\citep{gao2023clip,zhang2022unabridged,radford2021learning,zhang2023cae,Kirillov_2023_ICCV}.

Recently, with the increasing dominance of Vision Transformer (ViT) architectures~\citep{han2022survey,liu2021swin,khan2022transformers}, following the parameter-efficient transfer learning mechanism~\citep{houlsby2019parameter}, ViT adapter has become a central approach to learning vision-specific inductive biases from pre-trained ViT models~\citep{hu2022lora,jie2023fact,chen2022vision,ma2024segment,luo2023forgery,shao2023deepfake}, successfully addressing the drawbacks associated with the pre-training followed by fine-tuning paradigm.
The progressive ViT adapter enables downstream ViT models to achieve promising accuracy levels that are comparable to, or even higher than, those achieved by fine-tuning the entire model. 
% Fortunately, ViT adapter is usually accomplished using only approximately $5\%$ of the learning parameters~\citep{jie2023fact,chen2022vision,shao2023deepfake}. The classic ViT adapter is pre-training-free and mainly consists of a spatial \emph{feature extractor} and a spatial \emph{feature injector}~\citep{chen2022vision,hu2022lora,luo2023forgery,marouf2024mini}. The former extracts general features from a pre-trained ViT backbone network within each block, while the latter interacts the updated task-specific features with the ViT model~\citep{marouf2024mini,chen2022adaptformer,dong2024efficient}. In training, only the adapter's parameters need to be updated, while the ViT backbone's parameters remain fixed. 
In particular, thanks to the utilization of plain ViT models, which contain unique multi-modal information, adapters based on these models can effectively promote downstream models to learn beneficial semantic-rich feature representations~\citep{touvron2021training,hu2022lora,Kirillov_2023_ICCV,dosovitskiy2020image}. For example, ViT adapter under the plain ViT models has been successfully applied in multiple computer vision tasks, \eg,
% image classification~\citep{jie2023fact}, 
object detection (ODet)~\citep{li2022exploring}, instance segmentation (ISeg)~\citep{liu2024revisiting}, and semantic segmentation (SSeg)~\citep{xie2021segformer}.

Despite the significant progress made by existing ViT adapters~\citep{chen2022vision,liu2024revisiting,jie2023fact,marouf2024mini}, their inference speed is still somewhat unfavorable, which limits their implementations on edge computing devices~\citep{dong2024packqvit} and applications for real-time recognition scenarios~\citep{marouf2024mini}. 
Recently, it has been revealed that in addition to computation and parameter complexity (\eg, FLOPs and \#Params.), inefficient memory access operations~\citep{marouf2024mini,liu2023efficientvit}, such as standard normalization and frequent reshaping operations, play a critical role in hindering the ViT's inference speed~\citep{he2023simplifying,pan2022fast,fournier2023practical,shi2023evit}. 
In other words, the inefficient planning of memory access may cause delays and prevent models from fully utilizing the computing power of GPUs or CPUs, resulting in a significant negative impact on the speed of ViT models~\citep{liu2023efficientvit,dao2022flashattention,venkat2019swirl,gu2021towards,ivanov2021data}. However, these inefficient memory access operations are usually overlooked factors during the network design process for dense prediction tasks.
In this work, we explore how to solve this problem and accelerate the inference speed of ViT adapters in downstream models. \emph{Our solution is to decrease memory time consumption by reducing layer normalization and frequent reshaping operations.}

% -----------------------------------
% \input{figures/figure0}
% -----------------------------------
\begin{wrapfigure}{r}{0.53\textwidth}
\centering
\vspace{-4mm}
\includegraphics[width=.53\textwidth]{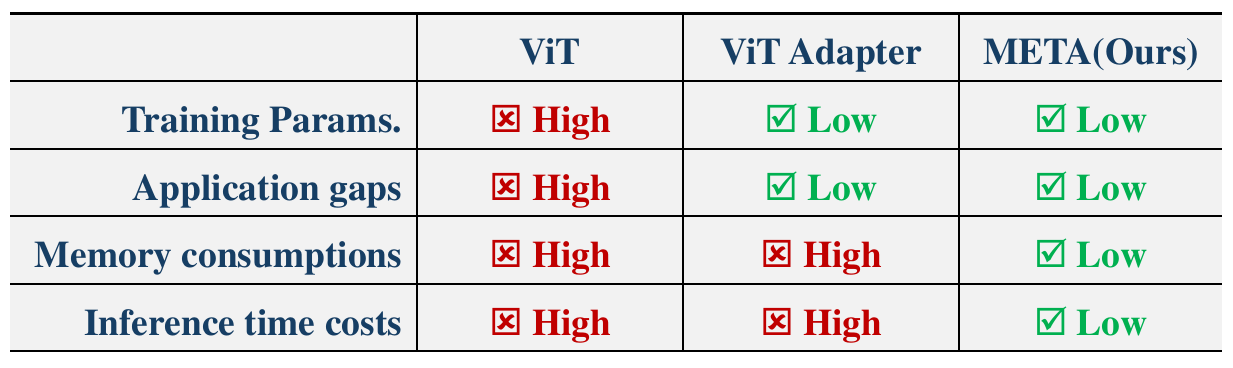}
\vspace{-8mm}
\caption{\footnotesize Qualitative performance comparisons of different models with respect to training parameters, application gaps, memory access costs, and inference time costs.}
\vspace{-4mm}
\label{fig0}
\end{wrapfigure}
% -----------------------------------
We propose a simple and fast memory efficient transformer adapter (META) that can improve the model's memory efficiency and decrease memory time consumption by reducing the inefficient memory access operations. 
As illustrated in Figure~\ref{fig0}, our META demonstrates advantages over existing ViT models and ViT adapters in terms of memory consumptions and inference time costs.
The main contribution of this work is the proposal of a \emph{{memory-efficient adapter block}} that shares {normalization operations} between the self-attention and feed-forward network layers, which exist in a parallel manner (\emph{Ref.} Sec.~\ref{sec:3:2}). Within this block, the cross-shaped self-attention is employed to reduce the reliance for frequent {reshaping operations}. 
Consequently, the proposed two reverse designs are capable of significantly reducing the memory consumption of the ViT adapter, resulting in an improved inference speed.
Moreover, to enrich local inductive biases for dense predictions, a \emph{{lightweight convolutional branch}} is introduced into the \emph{{memory-efficient adapter block}}. 
In the process of interacting with the ViT backbone, a \emph{{cascaded mechanism}} is further proposed to compute different head features, which can enhance the diversity of the obtained feature representations. We conduct extensive experiments for ODet, ISeg and SSeg on two challenging datasets, namely MS-COCO~\citep{lin2014microsoft} and ADE20K~\citep{zhou2017scene}, for evaluating our META. 
The obtained results demonstrate that META substantially enhances prediction quality, attains a new state-of-the-art accuracy level, reduces the number of parameters and memory consumption requirements, and achieves faster inference speeds. Theoretically, we also prove that META can exhibit superior generalization capability and stronger adaptability compared to existing ViT adapter methods\footnote{Due to page limitations, the theoretical analysis will been provided in the supplementary materials.}.
% ---------------------------------------------------
% -------------------------------
\section{Related Work}
% -------------------------------
\myparagraph{Vision Transformer (ViT).}
% -------------------------------
% ViT models have become one of the defacto network architectures for state-of-the-art computer vision tasks~\citep{han2022survey,zhang2022graph,liu2021swin,khan2022transformers,zhang2023cae,Kirillov_2023_ICCV}.
Current ViT models for state-of-the-art computer vision tasks can be roughly divided into two camps: \emph{plain ViT models}, which are designed to learn general vision features (\eg, ViT~\citep{dosovitskiy2020image}, DeiT~\citep{touvron2021training} and TDE Transformer~\citep{touvron2021training}), and \emph{hierarchical ViT models}, which are designed to learn vision-specific features (\eg, Swin Transformer~\citep{liu2021swin} and PVT~\citep{wang2021pyramid}). Both camps of ViT models have their pros and cons. 
For example, the \emph{hierarchical ViT models} can learn powerful vision-specific feature representations, which makes them usually perform better than the \emph{plain ViT models} on accuracy~\citep{han2022survey,khan2022transformers,Kirillov_2023_ICCV}. However, their unvarnished disadvantage is the lack of multi-modal pre-training information for dense predictions~\citep{chen2022vision}. On the other hand, although the \emph{plain ViT models} have weak prior assumptions, which results in lower accuracy compared to the \emph{hierarchical ViT models}, their multi-modal pre-training information has great potential to provide semantic-rich feature representations for downstream models~\citep{guo2022attention,zhang2023cae,he2022masked,zhang2024boundary,jia2022visual}. In this paper, we focus on the \emph{plain ViT models} for dense predictions. Our contribution is the proposal of a straightforward and memory efficient ViT adapter, starting from the reduction of inefficient memory access operations.

\myparagraph{ViT for Dense Predictions.}
% -------------------------------
Dense predictions, including ODet, ISeg, and SSeg, aim at predicting a semantic mask where each pixel/object in the given image is assigned a class label~\citep{ranftl2021vision,zhang2023cae,vandenhende2021multi}. For ISeg, the obtained mask also distinguishes different objects of the same class. 
% Dense predictions have been extensively studied in the past and have achieved remarkable results~\citep{chen2021pre,liu2023efficientvit,lee2022mpvit}. 
With the development of ViT technologies, dense prediction models based on ViT have become one of the default choices for state-of-the-art methods~\citep{liu2021swin,wang2021pyramid,dosovitskiy2020image}. These methods can be mainly divided into two categories: those based on Transformers and those that incorporate a combination of CNN layers. The former, \eg, SegFormer~\citep{strudel2021segmenter}, HRViT~\citep{gu2022multi}, Segvit~\citep{zhang2022segvit}, Swin UNet~\citep{hatamizadeh2021swin}, and ISTR~\citep{hu2021istr}, extract features from a given image and connect with task-specific head networks to achieve specific recognition purposes. The latter compensates for the lack of local inductive bias in the pure ViT architecture by introducing local convolutional layers, and representative methods include ConFormer~\citep{peng2021conformer}, CVT~\citep{wu2021cvt}, NextViT~\citep{li2022next}, and CAE-GReaT~\citep{zhang2023cae}. While the former can capture long-range dependencies and achieve favorable recognition capability, they require sufficient training samples to optimize the model, and further improvement is needed on memory access and inference speed. To this end, we propose a solution to the problem of limited inference speed caused by frequent memory consumption using a pre-trained ViT model.%  Our main objective is to achieve memory efficiency by reducing inefficient memory access operations.

% -------------------------------
\myparagraph{ViT Adapters.}
% -------------------------------
Adapter methods are originally derived from the NLP tasks, such as PALs~\citep{stickland2019bert} and NLP adapter~\citep{houlsby2019parameter}. These methods add a small number of trainable modules to a pre-trained network, so that downstream fine-tuning models can quickly adapt to specific datasets and tasks via a parameter-efficient transfer learning manner~\citep{hu2022lora,houlsby2019parameter,chen2022vision,shen2023survey}. In the computer vision domain, ViT adapters have also adopted the similar paradigm~\citep{jie2023fact,chen2022vision,shao2023deepfake,luo2023forgery}. 
% For example, in the training process, the ViT adapter branch first encodes the input image into a set of semantic features through a trainable spatial prior module, and then extracts the general features of the ViT backbone network through the spatial feature extractor, and injects the learned vision-specific features into the backbone network via the spatial feature injector~\citep{chen2022vision}. 
ViT adapter method is used to fine-tune large-scale plain ViT models with a small number of trainable parameters and achieve promising accuracy~\citep{jie2023fact,li2022exploring,chen2022vision}. 
% Currently, this technology has been widely used in many fundamental computer vision tasks, such as image classification~\citep{jie2023fact}, object detection~\citep{li2022exploring}, forgery detection~\citep{luo2023forgery}, and dense predictions~\citep{chen2022vision}. 
Inspired by~\citep{he2023simplifying,jie2022convolutional,chen2022vision,liu2023efficientvit,mercea2024time}, we propose an efficient solution to enhance the memory efficiency and minimize memory time consumption of ViT adapters. We also provide theoretical analysis to demonstrate the superior generalization and adaptability of our method.

% ---------------------------------------------------
% -------------------------------
\section{Memory Efficient Transformer Adapter (META)}

\subsection{Overall Architecture and Configurations}
% -------------------------------
As illustrated in Figure~\ref{fig1}, the network takes an arbitrary RGB image $\textbf{I} \in \mathbb{R}^{H \times W \times 3}$ as input and predicts a semantic mask $\textbf{O} \in \mathbb{R}^{H \times W \times C}$ as output. $H$ and $W$ denotes the image height and width, respectively, and $C$ denotes the class size of the used dataset.
Following~\citep{jie2023fact,li2022exploring,chen2022vision}, the whole network mainly consists of two parts: the upper part is a pre-trained plain ViT model (\eg, ViT~\citep{li2021benchmarking}), which consists of $4$ blocks. The $i$-th ($i = 1, 2, 3, 4$) block contains a set of semantic features $\textbf{F}^i_{vit}$ with the spatial size of $1/4$, $1/8$, $1/16$, and $1/32$ of $\textbf{I}$, respectively. 
The lower part is a trainable ViT adapter, which includes a spatial prior module as in~\citep{chen2022vision} for $\textbf{I}$'s encoding, where the encoded features within each block denote $\textbf{F}^i_{sp}$, as well as a set of cascaded memory efficient adapter (MEA) injectors (\emph{Ref.}~Sec.~\ref{sec:3:3}) and cascaded MEA extractors (\emph{Ref.}~Sec.~\ref{sec:3:4}) that act on each ViT backbone block. 
%
% As shown in Figure~\ref{fig1} (a), our main contribution of this work is the proposal of the MEA block, which serves as the foundation for the injector and the extractor. 
%
The ``cascaded'' refers to a computational scheme that is incorporated into the different heads of the proposed injectors/extractors, which can enhance the diversity of the obtained feature representations.
%
% The role of the injector and the extractor is to facilitate feature interactions between the ViT adapter and the pre-trained ViT backbone by operating on each block. 
%
Concretely, for the $i$-th trainable MEA injector, it injects the updated task-specific features back into the $i$-th ViT block and into the $i$-th extractor (as shown in Figure~\ref{fig1} (b)). For the $i$-th trainable MEA extractor, it extracts the $i$-th general features for the $(i+1)$-th injector (as shown in Figure~\ref{fig1} (c)). This process is repeated until the final ViT block is reached. 
%
% In training, only the adapter's parameters need to be updated, while the ViT's parameters remain fixed. 
%
Due to the incorporation of strategies aimed at reducing inefficient memory access operations in our method, which exhibits enhanced memory efficiency and reduced memory time consumption compared to existing ViT adapters, consequently leading to accelerated inference speed.
We will provide a detailed explanation of the basic component and mechanism of the MEA block in the following Sec.~\ref{sec:3:2} and how it can be deployed in the MEA injectors and MEA extractors to form a cascaded scheme.
% -------------------------------
\begin{figure*}[t]
\small
\centering
\includegraphics[width=1\textwidth]{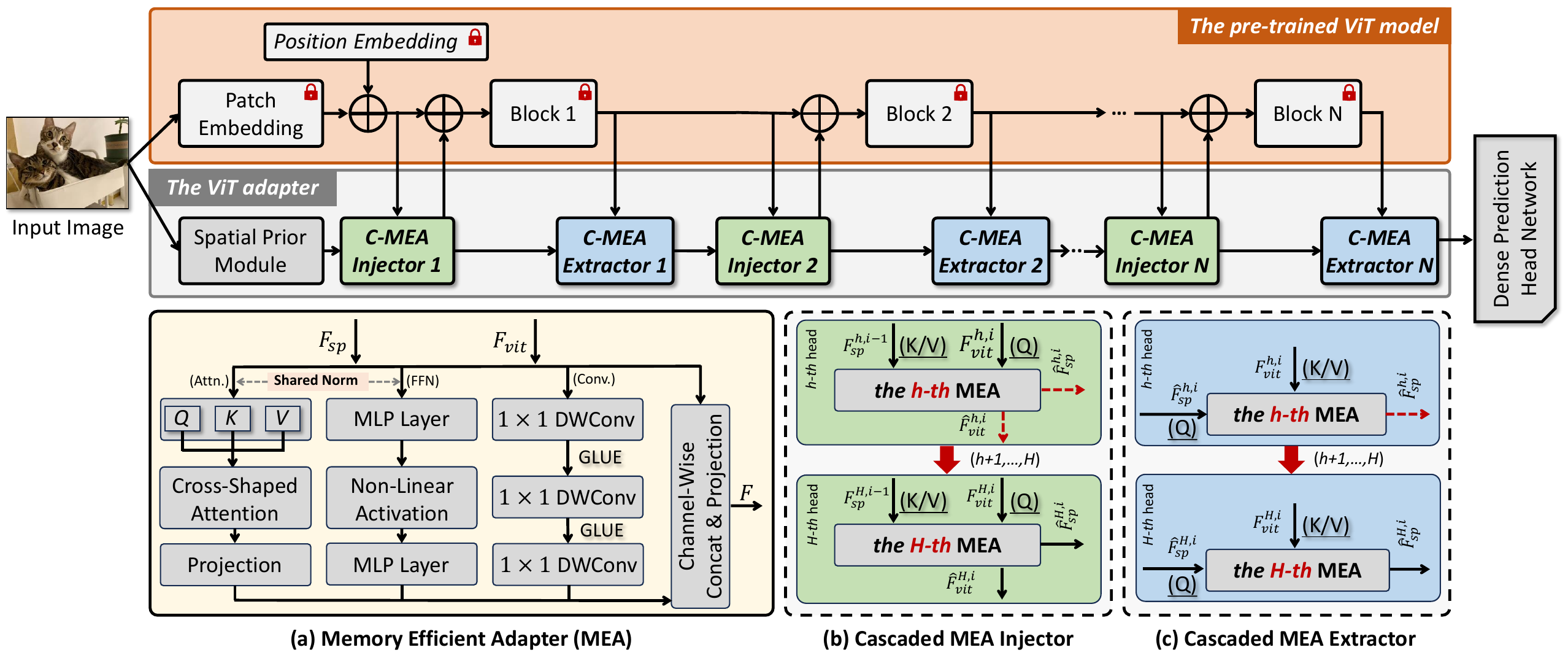}
\vspace{-8mm}
\caption{Overall architecture of META. Our primary contribution is the proposal of a MEA block in (a), which serves as the fundamental component for the injector in (b) and extractor in (c).}
\vspace{-3mm}
\label{fig1}
\end{figure*}
% -------------------------------

% -------------------------------
\subsection{MEA Block}
\label{sec:3:2}
% -------------------------------
The MEA block aims to interact the features extracted from the ViT backbone and the spatial prior module. Our block consists of the attention (\ie, Attn) branch, the feed-forward network (\ie, FFN) branch, and the lightweight convolutional (\ie, Conv) branch. The three branches exist in a parallel form, which is beneficial for computing in GPUs.
As shown in Figure~\ref{fig1} (a), consider an arbitrary block, the input is $\textbf{F}_{sp}$ and $\textbf{F}_{vit}$, and the output $\textbf{F}$ is obtained by concatenating the features from the Attn, FFN, and Conv branches along the channel dimension, and then passing them through a feature projection layer. Therefore, this process can be formulated as:
% -------------------------------
\begin{equation}
\textbf{F} = \textrm{Conv}_{3 \times 3}(\textrm{Concat}(
\underbrace{(\textrm{A}(\textbf{F}_{sp},\textbf{F}_{vit})}_{\textrm{\textbf{\textcolor{red}{Attn Branch}}}};
\underbrace{\textrm{F}(\textbf{F}_{sp},\textbf{F}_{vit})}_{\textrm{\textbf{\textcolor{blue}{FFN Branch}}}};
\underbrace{\textrm{C}(\textbf{F}_{sp},\textbf{F}_{vit})}_{\textrm{\textbf{\textcolor{orange}{Conv Branch}}}};\textbf{F}_{sp};\textbf{F}_{vit})),
\label{eq:1}
\end{equation}
% -------------------------------
where $\textrm{A} (\cdot)$, $\textrm{F} (\cdot)$, and $\textrm{C} (\cdot)$ denote the operation of the Atte, FFN, and Conv branch, respectively. In MEA, these three branches exist in a parallel manner. $\textrm{Concat} (\cdot)$ denotes the feature concatenation operation along the channel dimension, and $\textrm{Conv}_{3 \times 3}(\cdot)$ denotes a $3 \times 3$ convolution with the output channel size of $256$ for feature projection. 
%In our implementation, we also concatenate with $\textbf{F}_{sp}$ and $\textbf{F}_{vit}$ in $\textrm{Concat} (\cdot)$. For simplicity, this part is omitted in~\eqref{eq:1}. 
In particular, to construct an memory efficient block, following~\citep{he2023simplifying,wang2021gpt}, \emph{the Attn and FFN branches are subjected to a shared layer normalization operation, which leads to a decrease in memory time consumption associated with the normalization operations}.

% -------------------------------
\myparagraph{\textcolor{red}{Attn Branch.}} In our work, we adopt the cross-shaped self-attention (CSA) to reduce the model's access to the memory unit, which allows for \emph{the computation of the attention matrix across different spatial dimensions without requiring the input tensor to be frequently reshaped}~\citep{dong2022cswin,tu2022maxvit}. 
Specifically, CSA first performs self-attention separately along the horizontal and vertical dimensions of the given features. The outputs $\textrm{A}_H(\textbf{F}_{sp},\textbf{F}_{vit})$ and $\textrm{A}_V(\textbf{F}_{sp},\textbf{F}_{vit})$ of these two parallel groups are then concatenated along the channel dimension, followed by a feature projection operation via a $3 \times 3$ convolution with the output channel size of $256$, to form the output $\textrm{A}(\textbf{F}_{sp},\textbf{F}_{vit})$. 
For the horizontal self-attention, the input features $\textbf{F}_{sp}$ and $\textbf{F}_{vit}$ are first subjected to the shared layer normalization, and then divided into $M$ non-overlapping horizontal stripes, where each stripe has the spatial size of $s \times W$ (\ie, $s = H/M$). Then, the self-attention~\citep{vaswani2017attention} is performed on each stripe, which can be formulated as:
% -------------------------------
\begin{equation}
\textrm{A}^{m}_H(\textbf{F}_{sp},\textbf{F}_{vit}) = \textrm{SA}(\textrm{LN}(\textbf{F}^{m}_{sp}W^Q),\textrm{LN}(\textbf{F}^{m}_{vit}W^K),\textrm{LN}(\textbf{F}^{m}_{vit}W^V)),
\label{eq:2}
\end{equation}
% -------------------------------
where $\textrm{LN}(\cdot)$ denotes the shared layer normalization operation. $\textrm{SA}(\cdot)$ denotes the classical single head self-attention~\citep{vaswani2017attention}. $m$ denotes the $m$-th stripe and $m = 1,2,...,M$. $W^Q$, $W^K$, and $W^V$ are used to project the input features into the queries, keys, and values spaces, respectively. The sub-attention results of different stripes are merged together to form the output $\textrm{A}_H(\textbf{F}_{sp},\textbf{F}_{vit})$ of the horizontal group. In~\eqref{eq:2}, we choose $\textbf{F}_{sp}$ as queries and $\textbf{F}_{vit}$ as keys and values for the sake of example. In our injector and extractor in Sec~\ref{sec:3:3} and Sec~\ref{sec:3:4}, the roles of $\textbf{F}_{sp}$ and $\textbf{F}_{vit}$ can be swapped depending on the specific requirements. The attention computation along the vertical dimension $\textrm{A}_V(\textbf{F}_{sp},\textbf{F}_{vit})$ is also performed in a similar fashion.

% -------------------------------
\myparagraph{\textcolor{blue}{FFN Branch.}} 
% -------------------------------
% FFN branch is to enable interaction between the obtained features in the channel dimension, thereby enhancing the representation capacity of the features
Following the common setting~\citep{vaswani2017attention,tolstikhin2021mlp}, to enable interaction between the obtained features in the channel dimension, our FFN branch consists of two MLP layers and a non-linear activation layer~\citep{saxe2013exact,he2023simplifying}, where the MLP layer consists of a sequential arrangement of two $3 \times 3$ convolutional layers.

The input of the FFN branch is $\textbf{F}_{sp}$ and $\textbf{F}_{vit}$, which are sequentially processed by feature concatenation along the channel dimension, a $3 \times 3$ convolution with the output channel size of $256$, and the shared layer normalization. This process can be expressed as:
% -------------------------------
\begin{equation}
\textrm{F}(\textbf{F}_{sp},\textbf{F}_{vit})_{\textrm{Tem}} = \textrm{LN}(\textrm{Conv}_{3 \times 3}(\textrm{Concat}(\textbf{F}_{sp};\textbf{F}_{vit}))).
\label{eq:3}
\end{equation}
% -------------------------------
The temporary $\textrm{F}(\textbf{F}_{sp},\textbf{F}_{vit})_{\textrm{Tem}}$ are then passed through a MLP layer, a non-linear activation layer~\citep{saxe2013exact}, and another MLP layer to obtain the output $\textrm{F}(\textbf{F}_{sp},\textbf{F}_{vit})$. Hence, this can be expressed as:
% -------------------------------
\begin{equation}
\textrm{F}(\textbf{F}_{sp},\textbf{F}_{vit}) = \textrm{MLP}(\textrm{NLA}(\textrm{MLP}(\textrm{F}(\textbf{F}_{sp},\textbf{F}_{vit})_{\textrm{Tem}}))),
\label{eq:4}
\end{equation}
% -------------------------------
where $\textrm{MLP}(\cdot)$ denotes the MLP layer operation, and $\textrm{NLA}(\cdot)$ denotes the non-linear activation layer~\citep{saxe2013exact,he2023simplifying}.

% -------------------------------
\myparagraph{\textcolor{orange}{Conv Branch.}} 
% -------------------------------
Empirical evidence has demonstrated that incorporating local inductive biases into ViT models can provide notable benefits for visual tasks involving dense prediction~\citep{peng2021conformer,wu2021cvt,li2022next,zhang2023cae}. To achieve this goal, we introduce a lightweight convolutional branch into the ViT adapter. Specifically, the Conv branch also takes into $\textbf{F}_{sp}$ and $\textbf{F}_{vit}$ as the input, which consists of three $1 \times 1$ depth-wise convolutions concatenated in series, with a GLUE layer used to activate the features between every two convolutions~\citep{saxe2013exact,he2023simplifying}. The computation process of the Conv branch is formulated as:
% -------------------------------
\begin{equation}
\textrm{C}(\textbf{F}_{sp},\textbf{F}_{vit}) = \textrm{DC}(\textrm{GLU}(\textrm{DC}(\textrm{GLU}(\textrm{DC}(\textrm{Concat}(\textbf{F}_{sp};\textbf{F}_{vit}))))),
\label{eq:5}
\end{equation}
% -------------------------------
where $\textrm{DC}(\cdot)$ denotes the $1 \times 1$ depth-wise convolution with the output channel size of $256$, and $\textrm{GLU}(\cdot)$ denotes the GLUE layer.
% -------------------------------
With the help of shared layer normalization and CSA, MEA becomes a memory-efficient module with rich local inductive biases. In the following sections, we will provide detailed instructions on how to deploy our MEA block on injectors and extractors.
% -------------------------------
\subsection{Cascaded MEA Injector}
\label{sec:3:3}
% -------------------------------
% The injector aims at transmitting the updated task-specific features into the ViT backbone and into the extractor after feature interaction~\citep{marouf2024mini,chen2022adaptformer,dong2024efficient}. 
As illustrated in Figure~\ref{fig1} (b), the cascaded MEA injector adopts the paradigm of self-attention in its computational process, where $\textbf{F}^i_{vit}$ is used as the query, and $\textbf{F}^{i-1}_{sp}$ generated by the last extractor is used as the key and value. 
Particularly, to enhance the diversity of the obtained attention maps, a cascaded mechanism is proposed to compute different head features in the cascaded MEA injector. Specifically, we first divide the given features into $H$ parts along the channel dimension in a multi-head manner, following the classic self-attention~\citep{vaswani2017attention}, where $H$ is set to 16 in our work.
In the computation process of each head, the output of the $h$-th head $\hat{\textbf{F}}^{h,i}_{sp}$ and $\hat{\textbf{F}}^{h,i}_{vit}$ is added into the input features of the next ($h+1$)-th head $\textbf{F}^{h+1,i-1}_{sp}$ and $\textbf{F}^{h+1,i}_{vit}$ to be used in the calculation of subsequent self-attention features, where $h =1, 2, ..., H$. The cascaded process continues until the feature from the last head is included in the computation. Finally, the features obtained from these heads are concatenated along the channel dimension and projected through a $3 \times 3$ convolution layer before being outputted as the output of the $i$-th cascaded MEA injector. Besides, since the MEA block has only one output, $\hat{\textbf{F}}^{h,i}_{sp} = \hat{\textbf{F}}^{h,i}_{vit}$ in our cascaded MEA injector.

% -------------------------------
\subsection{Cascaded MEA Extractor}
\label{sec:3:4}
% -------------------------------
As illustrated in Figure~\ref{fig1} (c), following~\citep{marouf2024mini,chen2022adaptformer,dong2024efficient},
% the extractor is to extract general features for the ViT block and to interact with the features output by the injector before inputting them into the injector of the next block~\citep{marouf2024mini,chen2022adaptformer,dong2024efficient}. Therefore, the 
our $i$-th cascaded MEA extractor of the $h$-th head takes the $\hat{\textbf{F}}^{h,i}_{sp}$ generated by the injector and $\textbf{F}^{h,i}_{vit}$ as input, where $\hat{\textbf{F}}^{h,i}_{sp}$ is used as the query, $\textbf{F}^{h,i}_{vit}$ is used as the key and value. The output of the $i$-th cascaded MEA extractor is $\textbf{F}^{h,i}_{sp}$. Similarly, the cascaded mechanism is applied following the cascaded MEA injector until the feature from the last head is included in the computation.
\section{Experiments}
% -------------------------------
\subsection{Datasets and Evaluation Metrics}
\label{sec4:1}
% -------------------------------
\myparagraph{Datasets.} To facilitate a fair result comparison with existing methods, we conduct experiments, including the ablation analysis, on two commonly used datasets: MS-COCO~\citep{caesar2018coco} for ODet and ISeg, and ADE20K~\citep{zhou2017scene} for SSeg. Due to page limitations, the implementation details of these datasets will be given in the supplementary materials. 
% Below are the details of the used datasets:

% -------------------------------
% \begin{itemize}
% -------------------------------
% \item MS-COCO~\citep{caesar2018coco} is a representative yet challenging dataset for common scene IS and object detection, which consists of $118$k, $5$k and $20$k images for the \emph{training} set, the \emph{val} set and the \emph{test} set, respectively. In our experiments, the model is trained on the \emph{training} set and evaluated on the \emph{val} set.
% -------------------------------
% \item ADE20K~\citep{zhou2017scene} is a scene parsing dataset with $20$k images and $150$ object categories. Each image has pixel-level annotations for SS of objects and regions within the scene. The dataset is divided into $20$k, $2$k, and $3$k images for \emph{training}, \emph{val} and \emph{test}, respectively. Our model is trained on the \emph{training} set and evaluated on the \emph{val} set.
% -------------------------------
% \end{itemize}
% -------------------------------
% For data augmentation, random horizontal flip, brightness jittering and random scaling within the range of $[0.5, 2]$ are used in training as in~\citep{chen2022vision,luo2023forgery,zhang2023cae}. By default, the inference results are obtained at a single scale, unless explicitly specified otherwise.    

\myparagraph{Evaluation metrics.} The commonly adopted average precision (AP) and mean intersection-over-union (mIoU) are used to assess the model accuracy for ODet (AP$^\textrm{b}$)/ISeg (AP$^\textrm{m}$) and SSeg, respectively. Besides, to evaluate the efficiency, model Parameters (\#P), floating point operations (FLOPs), memory consumption (MC) of the adapter, and frames per second (FPS) are also adopted. The reported inference results are measured by A100 GPUs with per-GPU batch size 2.
%
% \emph{We acknowledge that using memory access cost can provide an intuitive reflection of a model's memory efficiency. However, since this metric is seldom reported in publicly available papers, we refrain from comparing this metric to ensure a fair comparison. Instead, we use FPS as the evaluation metric (Ref Sec.~\ref{sec4:4}), which has a strong correlation with the model's memory access cost}~\citep{liu2023efficientvit,dao2022flashattention,gu2021towards}\footnote{Due to page limitations, we only provide the key quantitative results. Qualitative results and more ablation studies will be presented in the supplementary material.}.  
% -------------------------------
% -------------------------------
\subsection{Experiments on Object Detection (ODet) and Instance Segmentation (ISeg)}
\label{sec4:2}
% -------------------------------
\myparagraph{Baselines and settings.} As in~\citep{chen2022vision,xiong2024efficient,jie2022convolutional,marouf2024mini}, Mask R-CNN~\citep{he2017mask}, Cascade Mask R-CNN~\citep{cai2019cascade}, ATSS~\citep{zhang2020bridging}, and GFL~\citep{li2020generalized} are employed as the baseline models, where the pre-trained ViT~\citep{li2022exploring} is used as the backbone. All the baseline models are pre-trained on ImageNet-1k by default~\citep{deng2009imagenet}. Unless otherwise specified, these baselines are set up to be consistent with their papers and the settings of the ViT-Adapter~\citep{chen2022vision} method.
% -------------------------------
\begin{table*}[t]
\centering
\small
\renewcommand\arraystretch{1.1}
\setlength{\tabcolsep}{1.2pt}{
\begin{tabular}{l|ccc|cccc|cccc}
\hline \hline 
\multirow{2}{*}{Methods} & \#P & FLOPs & MC & \multicolumn{4}{c|}{ $1 \times$ schedule} & \multicolumn{4}{c}{$3 \times$ +MS schedule} \\
~ & \begin{small}(\textbf{M})\end{small} & \begin{small}(\textbf{G})\end{small} & \begin{small}(\textbf{GB})\end{small} & AP$^\textrm{b}$  & AP$^\textrm{m}$  & AP$^\textrm{m}$$_{50}$  & AP$^\textrm{m}$$_{75}$  & AP$^\textrm{b}$ & AP$^\textrm{m}$  & AP$^\textrm{m}$$_{50}$ & AP$^\textrm{m}$$_{75}$  \\
\hline 
% PVT-Tiny~\citep{wang2021pyramid} & 32.9 &  &   &  &  35.1 & 56.7 & 37.3 &  &  37.4 & 59.3 & 39.9 \\
% PVTv2-B1~\citep{wang2022pvt} & 33.7 & 211 & NA & 41.8 & 38.8 & 61.2 & 41.6 & 44.9 & 40.8 & 64.0 & 43.8 \\
ViT-T~\citep{li2021benchmarking} & 26.1 & 205 & NA & 35.5 & 33.5 & 54.9 & 35.1 & 44.9 & 37.0 & 59.6 & 39.0 \\ 
ViTDet-T~\citep{li2022exploring} & 26.6 & 209 & NA & 35.7 & 33.5 & 54.7 & 35.2 & 40.4 & 37.1 & 60.1 & 39.3 \\
\cdashline{1-12}[0.8pt/2pt]
ViT-Adapter-T~\citep{chen2022vision} & 28.1 & 208 & 15.2 & 41.1 & 37.5 & 59.7 & 39.9 & 46.0 & 41.0 & 64.4 & 44.1 \\
AdaptFormer-T~\citep{chen2022adaptformer} & 27.9 & 210 & 17.7 & 36.4 & 34.2 & 55.6 & 35.9 & 45.2 & 37.5 & 60.5 & 40.8\\
FacT-TK-T~\citep{jie2023fact} & 27.7 & 213 & 16.1 & 36.6 & 34.5 & 55.5 & 36.7 & 46.3 & 41.2 & 64.7 & 44.0\\
LoSA-T~\citep{mercea2024time} & 27.6 & 208 & 13.0 & 41.4 & 38.0 & 60.1 & 40.5 & 46.1 & 39.5 & 64.2 & 44.3\\
\cellcolor[gray]{.95}\textbf{META-T$_{{\textrm{(Ours)}}}$} & \cellcolor[gray]{.95}27.4 & \cellcolor[gray]{.95}206 & \cellcolor[gray]{.95}8.1 & \cellcolor[gray]{.95}43.1& \cellcolor[gray]{.95}38.6  & \cellcolor[gray]{.95}60.8 & \cellcolor[gray]{.95}41.1  & \cellcolor[gray]{.95}47.5 & \cellcolor[gray]{.95}41.6  & \cellcolor[gray]{.95}65.0 & \cellcolor[gray]{.95}44.8 \\
\hline 
% PVT-Small~\citep{wang2021pyramid} \\ 
% PVTv2-B2~\citep{wang2022pvt} \\
% Swin-T~\citep{liu2021swin} & 47.8 & 267 &   &  &  39.3 & 62.2 & 42.2 &  & 41.6 & 65.1 & 44.9 \\
% ConvNeXt-T~\citep{liu2022convnet} & 48.1 & 262 & NA & 44.2 & 40.1 & 63.3  &  42.8 & 46.2 &  41.7 & 65.0 & 44.9 \\
% Focal-T~\citep{yang2021focal} & 48.8 & 41.0 & 64.7 & 44.2 & 42.7 & 66.5 & 45.9 \\
ViT-S~\citep{li2021benchmarking} & 43.8 & 245 & NA & 40.2 & 37.1 & 59.9 & 39.3 & 44.0 &  39.9 & 63.4 & 42.2 \\
ViTDet-S~\citep{li2022exploring} & 43.8 & 255  & NA & 40.6 & 37.1 & 60.0 & 38.8 & 44.5  &  40.1 & 63.6 & 42.5 \\
\cdashline{1-12}[0.8pt/2pt]
ViT-Adapter-S~\citep{chen2022vision} & 47.8 & 251 & 15.2 & 44.7 & 39.9 & 62.5 & 42.8 & 48.2 & 42.8 & 66.4 & 45.9 \\
AdaptFormer-S~\citep{chen2022adaptformer} & 47.5 & 255 & 17.7 & 41.2 & 37.8 & 60.3 & 41.0 & 45.0 & 41.2 & 64.2 & 43.4 \\
FacT-TK-S~\citep{jie2023fact} & 46.8 & 259 & 16.1 & 41.5 & 37.8 & 60.7 & 41.2 & 45.2 & 41.8 & 65.0 & 42.9 \\
LoSA-S~\citep{mercea2024time} & 46.1 & 249 & 13.0 & 44.2 & 39.8 & 62.6 & 42.5 & 48.3 & 42.5 & 66.5 & 46.0 \\
\cellcolor[gray]{.95}\textbf{META-S$_{{\textrm{(Ours)}}}$} & \cellcolor[gray]{.95}45.2 & \cellcolor[gray]{.95}247 & \cellcolor[gray]{.95}8.1  & \cellcolor[gray]{.95}45.5 & \cellcolor[gray]{.95}40.5 & \cellcolor[gray]{.95}63.4 & \cellcolor[gray]{.95}43.5 & \cellcolor[gray]{.95}49.5 & \cellcolor[gray]{.95}43.3 &  \cellcolor[gray]{.95}66.9 & \cellcolor[gray]{.95}46.6 \\
\hline 
% PVTv2-B5~\citep{wang2022pvt} & 101.6& &   &  &  42.5 & 65.7 & 46.0 &  & 42.9 & 66.6 & 46.2 \\
% Swin-B~\citep{liu2021swin} & 107.1 & 982 & NA & 46.9 & 42.3 & -- & -- & 48.6 & 43.3 & 67.1 & 46.7 \\
ViT-B~\citep{li2021benchmarking} & 113.6 & 719 & NA & 42.9 &  39.4 & 62.6 & 42.0 & 45.8 & 41.3 & 65.1 & 44.4 \\
ViTDet-B~\citep{li2022exploring} & 121.3 & 800 & NA & 43.2 &  39.2 & 62.7 & 41.4 & 46.3 & 41.6 & 65.3 & 44.5 \\
\cdashline{1-12}[0.8pt/2pt]
ViT-Adapter-B~\citep{chen2022vision} & 120.2 & 730 & 15.2 & 47.0 &  41.8 & 65.1 & 44.9 & 49.6 & 43.6 & 67.7 & 46.9 \\
AdaptFormer-B~\citep{chen2022adaptformer} & 118.5 & 733 & 17.7 &  44.5 & 40.3 & 63.1 & 43.0 & 46.2 & 42.4 & 66.2 & 45.7\\
FacT-TK-B~\citep{jie2023fact} & 118.0 & 735 & 16.1 & 44.9 & 41.2 & 64.1 & 43.2 & 47.6 & 43.2 & 67.5 & 46.3  \\
LoSA-B~\citep{mercea2024time} & 117.2 & 722 & 13.0 & 45.1 & 41.8 & 64.6 & 44.0 & 48.6 & 43.8 & 67.9 & 47.0 \\
\cellcolor[gray]{.95}\textbf{META-B$_{{\textrm{(Ours)}}}$} & \cellcolor[gray]{.95}115.3 & \cellcolor[gray]{.95}720  & \cellcolor[gray]{.95}8.1  & \cellcolor[gray]{.95}45.4 & \cellcolor[gray]{.95}42.3 & \cellcolor[gray]{.95}66.0 & \cellcolor[gray]{.95}45.7 & \cellcolor[gray]{.95}51.2 & \cellcolor[gray]{.95}44.3 & \cellcolor[gray]{.95}68.2 & \cellcolor[gray]{.95}47.5 \\
\hline 
ViT-L$^\ddag$~\citep{li2021benchmarking} & 337.3 & 1,907 & NA & 45.7 & 41.5 & 65.6 & 44.6 & 48.3 & 43.4 & 67.9 & 46.6 \\
ViTDet-L$^\ddag$~\citep{li2022exploring} & 350.9 & 1,900 & NA & 46.2 & 41.4 & 65.8 & 44.1 & 49.1 & 44.0 &  68.5 & 47.6\\
\cdashline{1-12}[0.8pt/2pt]
ViT-Adapter-L$^\ddag$~\citep{chen2022vision} & 347.9 & 1,927 & 15.2 & 48.7 & 43.3 & 67.0 & 46.9 & 52.1 & 46.0 & 70.5 & 49.7 \\
AdaptFormer-L$^\ddag$~\citep{chen2022adaptformer} & 345.7 & 1,924 & 17.7 & 46.3 & 42.6 & 66.1 & 45.8 & 49.0 & 44.8 & 68.1 & 47.1\\
FacT-TK-L$^\ddag$~\citep{jie2023fact} & 342.1 & 1,925 & 16.1 & 46.5 & 42.9 & 66.5 & 46.0 & 48.8 & 44.5 & 68.2 & 47.0 \\
LoSA-L$^\ddag$~\citep{mercea2024time} & 341.3 & 1,922 & 13.0 & 50.2 & 43.5 & 66.8 & 47.1 & 53.2 & 46.2 & 71.0 & 48.9 \\
\cellcolor[gray]{.95}\textbf{META-L$^\ddag$$_{{\textrm{(Ours)}}}$} & \cellcolor[gray]{.95}339.7 & \cellcolor[gray]{.95}1,905 & \cellcolor[gray]{.95}8.1 & \cellcolor[gray]{.95}51.8 &  \cellcolor[gray]{.95}44.1 & \cellcolor[gray]{.95}67.6 & \cellcolor[gray]{.95}47.8 & \cellcolor[gray]{.95}55.3 & \cellcolor[gray]{.95}46.7 & \cellcolor[gray]{.95}71.3 & \cellcolor[gray]{.95}50.4 \\
\hline \hline 
\end{tabular}
\vspace{-3mm}
\caption{Result comparisons with SOTA methods under Mask R-CNN~\citep{he2017mask} on the \emph{val} set of MS-COCO. $\ddag$ denotes the model is pre-trained on ImageNet-22k as in~\citep{steiner2021train}. ``MS'' denotes the multi-scale training strategy. ``NA'' denotes not applicable.}
\label{tab1}}
\vspace{-4mm}
\end{table*}
% -------------------------------

% -------------------------------
\myparagraph{Comparisons with state-of-the-art (SOTA) methods.} Result comparisons with SOTA methods with Mask R-CNN~\citep{he2017mask} for ODet and ISeg are shown in Table~\ref{tab1}. From this table, we can obtain the following observations and conclusions: 
\emph{\textbf{\romannumeral1}})~Compared to the experimental results of the Mask R-CNN~\citep{he2017mask} model with ViT~\citep{li2021benchmarking}, our proposed META can consistently improve accuracy for ODet and ISeg across different model scales (\eg, ViT-T/S/B/L~\citep{li2021benchmarking}), while only adding a small number of training parameters. Even with different training schedules (\ie, 1$\times$, and 3$\times$ with MS), our method can also improve the model performance, demonstrating the plug-and-play advantage of META. 
\emph{\textbf{\romannumeral2}})~Compared to the SOTA ViT-Adapter~\citep{chen2022vision} and LoSA~\citep{mercea2024time}, META can achieve a new accuracy-efficiency trade-off. For example, in settings with the strong ViT-B~\citep{li2021benchmarking} as the backbone, our method achieves a performance gain of $0.5\%$AP$^\textrm{m}$ under 1$\times$ training schedule and $0.7\%$AP$^\textrm{m}$ under 3$\times$ training schedule with MS while reducing 4.9\textbf{M} model parameters, when compared to ViT-Adapter~\citep{chen2022vision}. 
\emph{\textbf{\romannumeral3}})~Even with stronger pre-trained models, META still improves performance on the baseline models and surpasses existing methods on accuracy, parameters and memory. For example, on the ImageNet-22k pre-trained weights from~\citep{steiner2021train}, our method achieves a performance gain of $0.8\%$/$0.7\%$AP$^\textrm{m}$ under 1$\times$/3$\times$ training schedule while reducing 8.2\textbf{M} parameters compared to ViT-Adapter~\citep{chen2022vision}, which validates its strong learning ability and flexible adaptability.
\emph{\textbf{\romannumeral4}})~Compared to SOTA ODet and ISeg methods, META also has very competitive performance. For example, compared to the strong ViTDet-B~\citep{li2022exploring} model, META-T has $2.2\%$AP$^\textrm{b}$ and $3.1\%$AP$^\textrm{m}$ gains under the 1$\times$ training schedule, and has META-T has $4.9\%$AP$^\textrm{b}$ and $2.7\%$AP$^\textrm{m}$ gains under the 3$\times$ training schedule.
\emph{\textbf{\romannumeral5}})~Compared to SOTA ViT adapter methods such as AdaptFormer~\citep{chen2022adaptformer}, FacT-TK~\citep{jie2023fact}, and LoSA~\citep{mercea2024time}, our method exhibits superior efficiency in terms of reduced parameter count, decreased FLOPs, and lower MC. Particularly, META only utilizes $62\%$ of the MC of LoSA while achieving superior prediction accuracy. Therefore, the experimental results and conclusions above demonstrate that our method achieves better accuracy and higher efficiency in dense prediction tasks.
% -------------------------------
\begin{table}[t]
\centering
\small
\renewcommand\arraystretch{1.1}
\setlength{\tabcolsep}{6.2pt}{
\begin{tabular}{l|ccc|c|llll}
\hline \hline 
{Methods} & \#P & FLOPs & MC & {FPS} & {AP$^\textrm{b}$} & {AP$^\textrm{m}$} & {AP$^\textrm{m}$$_{50}$} & {AP$^\textrm{m}$$_{75}$}\\
\hline 
\multicolumn{9}{c}{Cascade Mask R-CNN~\citep{cai2019cascade} $3 \times$ +MS schedule} \\
\hline 
Swin-T~\citep{liu2021swin} & 86 & 745 & NA & 15.3 & 50.5 & 43.7 & 66.6 & 47.1  \\
% Shuffle-T~\citep{huang2021shuffle} & 86& 746& 44.1 & 66.9 & 48.0 \\
% PVTv2-B2~\citep{wang2022pvt} & 83& 788& 44.2 & 65.5 & 48.5 \\
% Focal-T~\citep{yang2021focal} & 87& 770& 44.7 & 65.5  &48.9 \\
ViT-S~\citep{li2021benchmarking} & 82 & 795 & NA & 16.5 & 47.9 & 42.8 & 62.1 & 44.8 \\
\cdashline{1-9}[0.8pt/2pt]
ViT-Adapter-S~\citep{chen2022vision} & 86 & 801 & 15.2 & 13.1 & 51.5 & 43.5 & 63.0 & 48.2 \\
LoSA-S~\citep{mercea2024time} & 84 & 799 & 13.0 & 12.0 & 53.5 & 43.8 & 64.1 & 48.1 \\
\cellcolor[gray]{.95}\textbf{META-S$_{{\textrm{(Ours)}}}$} & \cellcolor[gray]{.95}83& \cellcolor[gray]{.95}797 & \cellcolor[gray]{.95}8.1 & \cellcolor[gray]{.95}14.7& \cellcolor[gray]{.95}54.3 & \cellcolor[gray]{.95}44.8 & \cellcolor[gray]{.95}66.8 & \cellcolor[gray]{.95}49.5 \\
\hline 
Swin-B~\citep{liu2021swin} & 145 & 982 & NA & 11.6 & 51.9 & 45.0 & 68.4 & 48.7 \\
% Shuffle-B~\citep{huang2021shuffle} & 145& 989& 45.3 & 68.5 & 48.9 \\
ViT-B~\citep{li2021benchmarking} & 151 & 1,100 & NA & 13.0 & 50.1 & 42.7 & 63.9 & 44.1 \\
\cdashline{1-9}[0.8pt/2pt]
ViT-Adapter-B~\citep{chen2022vision} & 158 & 1,106 & 15.2 & 8.6 & 52.1 & 43.6 & 64.3 & 45.2 \\
LoSA-B~\citep{mercea2024time} & 155 & 1,104 & 13.0 & 9.0 & 52.6 & 43.5 & 63.8 & 44.7  \\
\cellcolor[gray]{.95}\textbf{META-B$_{{\textrm{(Ours)}}}$} & \cellcolor[gray]{.95}153& \cellcolor[gray]{.95}1,101 & \cellcolor[gray]{.95}8.1 & \cellcolor[gray]{.95}11.9 & \cellcolor[gray]{.95}53.8 & \cellcolor[gray]{.95}44.1 & \cellcolor[gray]{.95}65.2 & \cellcolor[gray]{.95}45.6 \\
\hline 
\multicolumn{9}{c}{ATSS~\citep{zhang2020bridging} $3 \times$ +MS schedule} \\
\hline 
Swin-T~\citep{liu2021swin} & 36 & 215 & NA & 17.1 & 47.2 & 41.2 & 54.8 & 45.5 \\
% Focal-T~\citep{yang2021focal} & 37& 239& 42.6 & 55.0 & 46.1 \\
% PVTv2-B2~\citep{wang2022pvt} & 33& 258& &  & &  42.1 & 55.2 & 45.8 \\
ViT-S~\citep{li2021benchmarking} & 32 & 263 & NA & 18.0 & 45.2 & 40.5 & 52.0 & 41.8 \\
\cdashline{1-9}[0.8pt/2pt]
ViT-Adapter-S~\citep{chen2022vision} & 36& 272 & 15.2 & 14.2 & 49.6 & 42.5 & 55.1 & 46.5 \\
LoSA-S~\citep{mercea2024time} & 35 & 268 & 13.0 & 16.0 & 50.3 & 41.9 & 54.4 & 45.0  \\
\cellcolor[gray]{.95}\textbf{META-S$_{{\textrm{(Ours)}}}$} & \cellcolor[gray]{.95}33& \cellcolor[gray]{.95}265 & \cellcolor[gray]{.95}8.1 & \cellcolor[gray]{.95}16.8 & \cellcolor[gray]{.95}54.9 & \cellcolor[gray]{.95}43.2 & \cellcolor[gray]{.95}55.6 & \cellcolor[gray]{.95}47.4 \\
\hline  
\multicolumn{9}{c}{GFL~\citep{li2020generalized} $3 \times$ +MS schedule} \\
\hline 
Swin-T~\citep{liu2021swin} & 36 & 251 & NA & 17.5 & 47.6 & 40.8 & 54.2 & 46.0\\
% PVTv2-B2~\citep{wang2022pvt} & 33& 261 & & & &  42.1 & 56.0 & 47.2\\
ViT-S~\citep{li2021benchmarking} & 32 & 275 & NA & 18.1 & 46.0 & 40.2 & 52.9 & 45.1\\
\cdashline{1-9}[0.8pt/2pt]
ViT-Adapter-S~\citep{chen2022vision} & 36 & 288 & 15.2 & 15.3 & 50.0 & 43.1 & 54.5 & 47.1 \\
LoSA-S~\citep{mercea2024time} & 35 & 284 & 13.0 & 16.2 & 51.2 & 43.3 & 54.7 & 47.5 \\
\cellcolor[gray]{.95}\textbf{META-S$_{{\textrm{(Ours)}}}$} & \cellcolor[gray]{.95}33& \cellcolor[gray]{.95}279 & \cellcolor[gray]{.95}8.1 & \cellcolor[gray]{.95}17.6 & \cellcolor[gray]{.95}55.6 & \cellcolor[gray]{.95}44.0 & \cellcolor[gray]{.95}55.2 & \cellcolor[gray]{.95}48.5 \\
\hline \hline 
\end{tabular}
\vspace{-3mm}
\caption{Result comparisons with SOTA methods under Cascade Mask R-CNN~\citep{cai2019cascade}, ATSS~\citep{zhang2020bridging}, and GFL~\citep{li2020generalized} on the \emph{val} set of MS-COCO.}
\label{tab2}}
\vspace{-2mm}
\end{table}
% -------------------------------

% -------------------------------
\myparagraph{Superiority performance under different baselines.} 
% -------------------------------
In addition to Mask R-CNN, we also choose Cascade Mask R-CNN~\citep{cai2019cascade}, ATSS~\citep{zhang2020bridging}, and GFL~\citep{li2020generalized} as the baselines as in~\cite{chen2022vision,mercea2024time}. We explore the effectiveness and superiority performance of META on these baseline models, where the 3$\times$ training with MS strategy is used. 
The experimental results are given in Table~\ref{tab2}. We can observe that META can consistently improve performance across different baselines, and exhibits more accurate and more efficient advantages compared to SOTA ViT adapter methods. 
For example, based on the Cascade Mask R-CNN~\citep{cai2019cascade}, META-S can achieve up to $44.8\%$ AP$^\textrm{m}$ with only $83$\textbf{M} parameters, $797$\textbf{G} FLOPs, and $8.1$\textbf{GB} MC, which is $1.3\%$ AP$^\textrm{m}$ higher, $3$\textbf{M} fewer parameters, $4$\textbf{G} fewer FLOPs, and $7.1$\textbf{GB} fewer MC compared to the competitive ViT-Adapter-T method~\citep{chen2022vision}. 
Besides, META-B can achieve a $1.7\%$ AP$^\textrm{b}$/$0.5\%$ AP$^\textrm{m}$ gain on the 3$\times$ training schedule, with $5$\textbf{M} fewer parameters and $5$\textbf{G} fewer FLOPs than the ViT-Adapter-B~\citep{chen2022vision} model.
On ATSS and GFL, our method also achieves competitive $43.2\%$ and $44.$0\% AP$^\textrm{m}$, respectively, which are $0.7\%$ and $0.9\%$ AP$^\textrm{m}$ higher than the competitive vision adapter ViT-Adapter-S~\citep{chen2022vision}. 
In terms of efficiency, compare with the SOTA LoSA~\citep{mercea2024time} method,  our method has fewer parameters ($33$\textbf{M} v.s. $35$\textbf{M}), fewer FLOPs ($265$\textbf{G} v.s. $268$\textbf{G} under ATSS, $279$\textbf{G} v.s. $284$\textbf{G} under GFL), and fewer MC ($8.1$\textbf{GB} v.s. $13.0$\textbf{GB} under ATSS and GFL), indicating its higher computation and memory efficiency.
%  \input{tables/table3}

% -------------------------------
% \myparagraph{Results on different pre-trained weights.} 
% -------------------------------
% In Table~\ref{tab3}, we present the experimental results of META with different pre-trained weights and compare them with other SOTA methods including SwinViT~\citep{liu2021swin} and ViT-Adapter~\citep{chen2022vision} as in~\citep{chen2022vision}. Mask R-CNN~\citep{he2017mask} is used as the baseline, and ViT-B~\citep{li2022exploring} is used as the backbone. The 3$\times$ training schedule with MS training strategy is used. From this table, we can observe that our method is applicable to different pre-trained weights (\ie, ImageNet-1k~\citep{deng2009imagenet}, ImageNet-22k~\citep{steiner2021train}, and Multi-Modal~\citep{zhu2022uni}), and achieves more accurate AP with fewer model parameters and FLOPs compared to ViT-Adapter~\citep{chen2022vision}, across different pre-trained weights. Our method achieves $0.7\%$, $0.6\%$, and $0.6\%$ higher AP than ViT-Adapter on these three weights, respectively. These results demonstrate the superiority of our method in terms of flexibility, accuracy and efficiency.
% -------------------------------
% -------------------------------
\subsection{Experiments on Semantic Segmentation (SSeg)}
\label{sec4:3}
% -------------------------------
\myparagraph{Baselines and Settings.} 
% Experiments on SSeg are conducted using the MMSegmentation framework~\citep{mmseg2020}. 
Following~\citep{chen2022vision,jie2023fact,jie2022convolutional}, we select Semantic FPN~\citep{kirillov2019panoptic} and UperNet~\citep{xiao2018unified} as baseline models, where the Semantic FPN is trained for $80$k iterations and the UperNet is trained for $160$k iterations as in~\citep{wang2021pyramid,liu2021swin}. 
%The input images are cropped to a fix size of 512 $\times$ 512 pixels as in~\citep{xiong2024efficient,chen2022vision}. The training batch size is set to $16$, and AdamW~\citep{loshchilov2017decoupled} is used as the optimizer with the initial learning rate of $1 \times 10^{-5}$ and the weight decay of $0.05$. Following~\citep{li2022exploring,liu2021swin}, the layer-wise learning rate decay is set to $0.9$ and the drop path rate is set to $0.4$. We report the experimental results on both single scale training and MS training strategies. 
Unless otherwise specified, the training and inference settings are set up to be consistent with the ViT-Adapter~\citep{chen2022vision}.
% -------------------------------
\begin{table*}[t]
\centering
\small
\renewcommand\arraystretch{1.1}
\setlength{\tabcolsep}{.6pt}{
\begin{tabular}{l|c|cccc|cccc}
\hline \hline 
\multirow{2}{*}{Methods} & \multirow{2}{*}{Pre-Training} & \multicolumn{4}{c|}{Semantic FPN 80k} & \multicolumn{4}{c}{UperNet $160$k}  \\
~ &  ~ & \footnotesize \#P & MC & mIoU & mIoU$_{(w/\textrm{MS})}$ & \#P & MC & mIoU & mIoU$_{(w/\textrm{MS})}$ \\
\hline 
PVT-Tiny~\citep{wang2021pyramid} & IN-1k  & 17.0 & NA & 36.6 & 37.3 & 43.2 & NA & 38.5 & 39.0 \\
ViT-T~\citep{li2021benchmarking} & IN-1k  & 10.2 & NA & 39.4 & 40.5 & 34.1 & NA & 41.7 & 42.6 \\
\cdashline{1-10}[0.8pt/2pt] 
ViT-Adapter-T~\citep{chen2022vision} & IN-1k  & 12.2  & 14.0 & 41.7 & 42.1 & 36.1 & 14.0 & 42.6 & 43.6 \\
LoSA-T~\citep{mercea2024time} & IN-1k & 11.1 & 10.7 & 40.8 & 41.9 & 36.3 & 10.7 & 42.1 & 43.7 \\
\cellcolor[gray]{.95}\textbf{META-T$_{{\textrm{(Ours)}}}$} & \cellcolor[gray]{.95}IN-1k & \cellcolor[gray]{.95}10.8  & \cellcolor[gray]{.95}5.9 & \cellcolor[gray]{.95}42.2 & \cellcolor[gray]{.95}43.0 & \cellcolor[gray]{.95}35.0  & \cellcolor[gray]{.95}5.9 & \cellcolor[gray]{.95}43.1 & \cellcolor[gray]{.95}44.1 \\
\hline 
% PVT-Small~\citep{wang2021pyramid} & IN-1k  & 28.2 & 41.9 & 42.3 & 54.5 & 43.7 & 44.0 \\
% PVTv2-B2~\citep{wang2022pvt} & IN-1k  & 29.1 & 45.2 & 45.7 & -- & -- & -- \\
% Swin-T~\citep{liu2021swin} & IN-1k  & 31.9 & 41.5 & -- & 59.9 & 44.5 & 45.8 \\
Twins-SVT-S~\citep{chu2021twins} & IN-1k  & 28.3 & NA & 43.2 & 44.1 & 54.4 & NA & 46.2 & 47.1  \\
ViT-S~\citep{li2021benchmarking} & IN-1k  & 27.8 & NA & 44.6 & 45.8 & 53.6 & NA & 44.6 & 45.7 \\
\cdashline{1-10}[0.8pt/2pt] 
ViT-Adapter-S~\citep{chen2022vision} & IN-1k  & 31.9 & 14.0 & 46.1 & 46.6 & 57.6 & 14.0 & 46.2 & 47.1 \\
LoSA-S~\citep{mercea2024time} & IN-1k & 29.8 & 10.7 & 45.7 & 46.7 & 54.9 & 10.7 & 45.8  & 46.6 \\
\cellcolor[gray]{.95}\textbf{META-S$_{{\textrm{(Ours)}}}$} & \cellcolor[gray]{.95}IN-1k & \cellcolor[gray]{.95}29.4  & \cellcolor[gray]{.95}5.9 & \cellcolor[gray]{.95}46.9 & \cellcolor[gray]{.95}49.5 & \cellcolor[gray]{.95}54.7  & \cellcolor[gray]{.95}5.9 & \cellcolor[gray]{.95}47.2 & \cellcolor[gray]{.95}48.0 \\
\hline 
% Swin-B~\citep{liu2021swin} & IN-1k  & 91.2 & 46.0 & -- & 121.0 & 48.1 &49.7 \\
Twins-SVT-L~\citep{chu2021twins} & IN-1k  & 103.7 & NA & 46.7 & 45.5 & 133.0 & NA & 48.8 & 50.2\\
ViT-B~\citep{li2021benchmarking} & IN-1k  & 98.0  & NA & 46.4 & 47.6 & 127.3 & NA & 46.1 & 47.1\\
\cdashline{1-10}[0.8pt/2pt] 
ViT-Adapter-B~\citep{chen2022vision} & IN-1k  & 104.6 & 14.0 & 47.9 & 48.9 & 133.9  & 14.0 & 48.8 & 49.7 \\
LoSA-B~\citep{mercea2024time} & IN-1k & 103.5 & 10.7 & 48.3 & 49.2 & 131.2 & 10.7 & 47.3  & 48.5 \\
\cellcolor[gray]{.95}\textbf{META-B$_{{\textrm{(Ours)}}}$} & \cellcolor[gray]{.95}IN-1k & \cellcolor[gray]{.95}100.4  & \cellcolor[gray]{.95}5.9 & \cellcolor[gray]{.95}48.7 & \cellcolor[gray]{.95}49.7 & \cellcolor[gray]{.95}129.1  & \cellcolor[gray]{.95}5.9 & \cellcolor[gray]{.95}49.4 & \cellcolor[gray]{.95}50.5 \\
\hline 
Swin-L$^\ddag$~\citep{liu2021swin} & IN-22k & 234.0 & NA & --  & --  & 234.0 & NA & 52.1 & 53.5 \\
ViT-Adapter-B$^\ddag$~\citep{chen2022vision} & IN-22k  & 104.6 & 14.0 & 50.7 & 51.9 & 133.9  & 14.0 & 51.9 & 52.5 \\
ViT-Adapter-L$^\ddag$~\citep{chen2022vision} & IN-22k  & 332.0 & 14.0 & 52.9 & 53.7 & 363.8  & 14.0 & 53.4 & 54.4 \\
LoSA-L$^\ddag$~\citep{mercea2024time} & IN-22k & 310.5 & 10.7 & 52.3 & 54.0 & 338.5& 10.7 & 53.0 & 54.5\\
\cellcolor[gray]{.95}\textbf{META-L$^\ddag$$_{{\textrm{(Ours)}}}$} & \cellcolor[gray]{.95}IN-22k  & \cellcolor[gray]{.95}307.7 & \cellcolor[gray]{.95}5.9 & \cellcolor[gray]{.95}53.7 & \cellcolor[gray]{.95}54.5 & \cellcolor[gray]{.95}336.3  & \cellcolor[gray]{.95}5.9 & \cellcolor[gray]{.95}54.0 & \cellcolor[gray]{.95}55.2 \\
\hline 
ViT-Adapter-L$^*$~\citep{chen2022vision} & MM & 332.0 & 14.0 & 54.2 & 54.7 & 363.8  & 14.0 & 55.0 & 55.4 \\
LoSA-L$^*$~\citep{mercea2024time} & MM & 310.1 & 10.7 & 53.3 & 54.5 & 341.0 & 10.7 & 55.1 & 55.5 \\
\cellcolor[gray]{.95}\textbf{META-L$^*$$_{{\textrm{(Ours)}}}$} & \cellcolor[gray]{.95}MM & \cellcolor[gray]{.95}307.7  & \cellcolor[gray]{.95}5.9 & \cellcolor[gray]{.95}54.7 & \cellcolor[gray]{.95}55.2 & \cellcolor[gray]{.95}336.3  & \cellcolor[gray]{.95}5.9 & \cellcolor[gray]{.95}55.6 & \cellcolor[gray]{.95}55.9 \\
\hline \hline 
\end{tabular}
\vspace{-2mm}
\caption{Result comparisons with SOTA methods on the \emph{val} set of ADE20K. ``--'' denotes there is no such a result in its paper. ``IN'' and ``MM'' denotes ImageNet and Multi-Modal, respectively.}
\label{tab4}}
\vspace{-3mm}
\end{table*}
% -------------------------------

% -------------------------------
\myparagraph{Comparisons with state-of-the-art methods.} 
We show the SSeg experimental results of META under different settings and compare them with SOTA SSeg methods in Table~\ref{tab4}. We can observe that \emph{\textbf{\romannumeral1}}) Our method can consistently improve performance across different baselines, model scales, training strategies, and pre-training weights (including IN-1K, IN-22K, and MM), while having fewer model parameters and memory consumption compared to the advanced ViT-Adapter~\citep{chen2022vision} and LoSA~\citep{mercea2024time} methods. This demonstrates the strong applicability and learning ability of META, which can solve the problem of limited applicability in certain scenarios of downstream models in the traditional pre-training and fine-tuning strategy at the application level. \emph{\textbf{\romannumeral2}}) Compared to SOTA methods, META also can achieve a new SOTA accuracy-cost trade-off. The parameter from META is only about $0.2\%$ of the overall model parameter count, indicating a minimal impact on the total parameter count. Furthermore, the memory consumption of our method accounts for only 55.14\% of the SOTA LoSA~\citep{mercea2024time}.
\emph{\textbf{\romannumeral3}}) On larger model scales, META achieves higher efficiency with fewer parameters (\eg, -$1.4$\textbf{M} on {META-T} and -$4.2$\textbf{M} on {META-B}). This indicates that our method is suitable for fine-tuning large ViT models.
% -------------------------------
% -------------------------------
\subsection{Ablation Analysis}
\label{sec4:4}
% -------------------------------
In our ablation analysis for ISeg and ODet,
%we aim to explore the answers of the following two key questions: 1)~\emph{what is the contribution of each component in META to the overall performance?} 2)~\emph{how does the performance of our MEA compare to other modules that can be used for adapters?} To find the answers to these questions, 
we adopt Mask R-CNN~\citep{he2017mask} under the 3$\times$ training schedule with MS as the baseline and report the experimental results on the \emph{val} set of MS-COCO~\citep{caesar2018coco}, where the ImageNet-1k pre-trained ViT-B~\citep{li2022exploring} is used as the backbone. For SSeg, we choose UperNet~\citep{xiao2018unified} with 160k iterations as the baseline, where the ImageNet-1k pre-trained ViT-B~\citep{li2022exploring} is used as the backbone. We report the single-scale testing results on the \emph{val} set of ADE20K~\citep{zhou2017scene}.
Due to page limitations, we only present the results of the effectiveness of each component in the main paper. Other aspects, such as result comparisons with other variants of ViT adapter methods, will be provided in the supplementary materials as part of the ablation analysis.

\begin{table}[t]
\centering
\small
\renewcommand\arraystretch{1.1}
\setlength{\tabcolsep}{.5pt}{
\begin{tabular}{c|cccc|cccccc}
\hline \hline 
\multicolumn{11}{c}{ISeg and ODet results on the \emph{val} set of MS-COCO~\citep{caesar2018coco}} \\
\hline 
ViT-B~\citep{li2021benchmarking} & \textcolor{red}{Attn Branch.} & \textcolor{blue}{FFN Branch.} & \textcolor{orange}{Conv Branch.} & Cascade & AP$^\textrm{m}$ & AP$^\textrm{b}$ & \#P & FLOPs & MC & FPS \\
\hline 
\cmark & & & & & 41.3 & 45.8 & 113.6 & 719 & NA & 11.5 \\
\cdashline{1-11}[0.8pt/2pt] 
\cmark & \cmark & & & & 32.1 & 33.6 & 113.9 & 719 & 7.5 &  11.3 \\
\cmark & \cmark & \cmark & & & 43.4 & 46.5 & 114.4 & 719 & 7.5 & 11.3 \\
\cmark & & \cmark & & & 30.8 & 31.2 & 114.0 & 719  & 0.4 & 11.4\\
\cmark & \cmark & \cmark & \cmark & & 44.1 & 48.1 & 115.3 & 719 & 7.5 &  10.2 \\
\cmark & \cmark & \cmark & & \cmark & 43.6 & 49.3 & 114.4 & 720 & 8.1 & 11.0 \\
\cmark & \cmark & \cmark & \cmark & \cmark & 44.3 & 51.2 & 115.3 & 720 & 8.1 & 11.1 \\
\hline 
\multicolumn{11}{c}{SSeg results on the \emph{val} set of ADE20K~\citep{zhou2017scene}} \\
\hline 
UperNet~\citep{xiao2018unified} & \textcolor{red}{Attn Branch.} & \textcolor{blue}{FFN Branch.} & \textcolor{orange}{Conv Branch.} & Cascade & \multicolumn{2}{c}{mIoU} & \#P & FLOPs & MC & FPS \\
\hline 
\cmark & & & & & \multicolumn{2}{c}{46.1} & 127.3 & 1,841 & NA & 25.4 \\
\cdashline{1-11}[0.8pt/2pt] 
\cmark & \cmark & & & & \multicolumn{2}{c}{33.7} & 128.0 & 1,843 & 5.1 & 25.0 \\
\cmark & \cmark & \cmark & & & \multicolumn{2}{c}{46.8} & 128.5 & 1,846 & 5.1 & 24.9\\
\cmark & & \cmark & & & \multicolumn{2}{c}{31.1} & 128.1 & 1,841 & 0.3 & 25.0\\
\cmark & \cmark & \cmark & \cmark &  & \multicolumn{2}{c}{47.9} & 129.0 & 1,847 & 5.1 & 23.5\\
\cmark & \cmark & \cmark & & \cmark & \multicolumn{2}{c}{48.0} & 128.6 & 1,846 & 5.9 & 25.0\\
\cmark & \cmark & \cmark & \cmark & \cmark & \multicolumn{2}{c}{49.4} & 129.1 & 1,847 &  5.9 & 24.9 \\
\hline \hline 
\end{tabular}
\vspace{-2mm}
\caption{Effectiveness of each component of META. ``NA'' denotes not applicable.}
\label{tab5}}
\vspace{-4mm}
\end{table}
% -------------------------------
\myparagraph{Effectiveness of each component.} The experimental results of gradually adding META components on the pre-trained ViT-B~\citep{li2022exploring} model are shown in Table~\ref{tab5}. First, we can observe that directly using the Attn branch as the adapter on the pre-trained ViT-B~\citep{li2022exploring} model not only fails to improve performance but also significantly decreases AP$^\textrm{m}$ and AP$^\textrm{b}$ (\ie, from $41.3\%$ to $32.1\%$ and from $45.8\%$ to $33.6\%$). This indicates that using attention alone as the feature interaction method in the adapter is not enough.
Then, on this basis, when we use both the Attn branch and FFN branch as adapter components (where the layer normalization is used as a shared manner), the model's accuracy is significantly improved, surpassing the baseline model by $2.1\%$ AP$^\textrm{m}$/$0.7\%$ AP$^\textrm{b}$ and achieving $43.4\%$ AP$^\textrm{m}$/$46.5\%$ AP$^\textrm{m}$. 
At the same time, the model only has a small increase in parameters and no increase in FLOPs. The same conclusion can be drawn from the SSeg task.
\emph{Furthermore, the independent utilization of the FFN branch as the adapter component does not yield an improvement in accuracy. Instead, it significantly detracts from the model's accuracy across all three tasks evaluated.}
After that, continuing to add the Conv branch or using the cascaded mechanism can bring sustained performance improvement, \eg, $0.7\%$ and $0.2\%$ AP$^\textrm{m}$, and $1.6\%$ and $2.8\%$ AP$^\textrm{b}$, respectively. Finally, when using all components in the adapter, the model for ISeg and ODet achieves the best performance by $44.3\%$ AP$^\textrm{m}$ and $51.2\%$ AP$^\textrm{b}$ with $115.3$\textbf{M} parameters, $720$\textbf{G} FLOPs, $8.1$\textbf{GB} MC, and $11.1$ FPS. The model for SSeg achieves the best accuracy by 49.4 mIoU with $129.1$\textbf{M} parameters, $1847$\textbf{G} FLOPs, $5.9$\textbf{GB} MC, and 24.9 FPS.
In addition, results in the ablation analysis reveal that the increase in the model's FLOPs and MC is mainly attributed to the utilization of the ``cascade'' mechanism. Despite this, this mechanism can lead to significant improvements in both accuracy and speed.
% -------------------------------
% \input{tables/table6}
% -------------------------------

% -------------------------------
% \myparagraph{Superiority of META.} 
% ------------------------------- 
% META is proposed as a simple and fast ViT adapter by minimizing inefficient memory access operations. In this section, we compare META with other efficient attention methods and advanced adapter methods~\citep{marouf2024mini,xia2022vision,sung2022vl}. All attention methods are used with their default settings and the same settings as the injector and extractor in ViT-adapter~\citep{chen2022vision} to ensure the fairness of the experiments. The obtained experimental results are given in Table~\ref{tab6}. First, we can observe that compared to these methods, META achieves new state-of-the-art performance in both accuracy and efficiency. We ultimately achieve an AP of $44.3\%$ with $115.3$\textbf{M} parameters, $751$\textbf{G} FLOPs, and $17.4$ FPS. It is also worth noting that our method has a significant improvement in model speed. For example, compared to the state-of-the-art ViT-adapter~\citep{chen2022vision} based on DeformableAtt~\citep{xia2022vision}, our method reduces $50.7$\textbf{M} parameters and $7$\textbf{G} FLOPs, while bringing a performance gain of $0.6$\%AP and $3.9$FPS. Since META does not make too many changes in network architecture, these results can demonstrate the advantages of our method in terms of memory access costs.

% ---------------------------------------------------
\section{Conclusion}
We proposed a simple and fast META that aims at improving the ViT model's memory efficiency and reduce memory time consumption by minimizing layer normalization and frequent reshaping operations. The main contribution of this work was the proposal of the MEA block that shares the {layer normalization} between the self-attention and feed-forward network layers, which exist in a parallel manner. Within the proposed block, the cross-shaped self-attention was employed to reduce the requirements for frequent {reshaping operations}. Moreover, a lightweight local convolutional branch is introduced into META to enrich local inductive biases. 
Experimental results on object detection, instance segmentation, and semantic segmentation tasks demonstrated that META can achieve a new state-of-the-art accuracy-cost trade-off and at a faster inference speed. In addition to the experimental validation, we also theoretical proved that META exhibits superior generalization capability and stronger adaptability compared to current ViT adapters. In the future, we will explore the effectiveness of META on more ViT architectures, such as designing memory-efficient ViT models for a wider range of computer vision tasks. Besides, reducing the model's memory footprint is identified as a promising research direction to be explored for LLMs. 

\section*{Acknowledgment}
The authors would like to thank all anonymous reviewers for their positive comments and constructive suggestions. This work was partially supported by the Hong Kong SAR RGC General Research Fund under Grant 16208823 and ACCESS - AI Chip Center for Emerging Smart Systems, sponsored by InnoHK funding, Hong Kong SAR.

% ----------------------------------------------
\bibliography{main}
\bibliographystyle{iclr2025_conference}
% ----------------------------------------------
% ----------------------------------------------
\newpage
\section*{supplementary material}
In this supplementary material, we will provide a theoretical analysis to the proposed memory efficient Transformer adapter (META) in Section~\ref{secS1}, provide a detailed description of the experimental datasets in Section~\ref{secS2}, provide a detailed description of the experimental settings in Section~\ref{secS3},
provide more result comparisons under different pre-trained weights in Section~\ref{secS4},
provide more ablation study results in Section~\ref{secS5}, show class activation map comparisons of instance segmentation before and after adding the Conv branch in Section~\ref{secS6},qualitative visualizations of instance segmentation and semantic segmentation results in Section~\ref{secS7},  as well as the pseudo-code for when the stripe size is set to $2$ in Section~\ref{secS8}. 
% -------------------------------------------
\section{Theoretical Analysis of META}
\label{secS1}
% -------------------------------------------
{\color{red}{\emph{This supplementary is for Section~3 of the main paper.}}} In this section, we will prove that META exhibits superior generalization capability and stronger adaptability compared to existing ViT adapters. 
To achieve this goal, we will prove that the proposed memory efficient adapter (MEA) block possesses larger information entropy (IE) than the existing attention-based ViT adapters~\citep{hu2022lora,jie2023fact,chen2022vision,ma2024segment,luo2023forgery,shao2023deepfake}, which provides evidence that the MEA block has more comprehensive feature representations. Then, based on the maximum mean discrepancy (MMD) theory~\citep{cheng2021neural,arbel2019maximum,wang2021rethinking}, larger IE in the ViT adapter framework leads to superior generalization capability and stronger adaptability. The detailed theoretical analysis process is as follows:

\begin{lemma}
% ---------------------------------
In any case of mutual information, the MEA block will gain larger information entropy after fusing $\textbf{X}_{vit}$ and $\textbf{X}_{con}$.
% ---------------------------------
\end{lemma}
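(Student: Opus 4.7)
The plan is to reduce the claim to the chain-rule identity for Shannon entropy, which will give the inequality regardless of the value of $I(\textbf{X}_{vit};\textbf{X}_{con})$. First, I would formalize the fusion step inside the MEA block as producing a joint random representation $\textbf{X}_{fused}$ whose information content equals that of the pair $(\textbf{X}_{vit},\textbf{X}_{con})$; in equation~\eqref{eq:1} this is reasonable because $\textrm{Concat}(\cdot)$ preserves both inputs losslessly before the $3\times 3$ projection, so in the theoretical model one may identify $\textbf{X}_{fused}$ with the pre-projection concatenation and set $H(\textbf{X}_{fused}) = H(\textbf{X}_{vit},\textbf{X}_{con})$.

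Next, I would invoke the decomposition
\begin{equation*}
H(\textbf{X}_{vit},\textbf{X}_{con}) \;=\; H(\textbf{X}_{vit}) + H(\textbf{X}_{con}\mid\textbf{X}_{vit}) \;=\; H(\textbf{X}_{con}) + H(\textbf{X}_{vit}\mid\textbf{X}_{con}),
\end{equation*}
and use the non-negativity of conditional entropy to obtain $H(\textbf{X}_{vit},\textbf{X}_{con}) \geq \max\{H(\textbf{X}_{vit}),\,H(\textbf{X}_{con})\}$. Equivalently, via $H(X,Y) = H(X)+H(Y)-I(X;Y)$, the fused entropy exceeds $H(\textbf{X}_{vit})$ exactly when $H(\textbf{X}_{con}) > I(\textbf{X}_{vit};\textbf{X}_{con})$, which covers every regime of mutual information short of $\textbf{X}_{con}$ being an almost-sure function of $\textbf{X}_{vit}$.

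To sharpen the weak inequality into a strict one, I would then argue that the Conv branch of equation~\eqref{eq:5}, built from depth-wise convolutions and $\textrm{GLU}$ activations, injects local inductive biases that are not reachable from the global self-attention features of the plain ViT branch. This yields $H(\textbf{X}_{con}\mid\textbf{X}_{vit}) > 0$, ruling out the equality case in the chain rule. The conclusion is that for \emph{any} value of $I(\textbf{X}_{vit};\textbf{X}_{con})$, whether the two branches are nearly independent or tightly coupled, one has $H(\textbf{X}_{fused}) > \max\{H(\textbf{X}_{vit}),\,H(\textbf{X}_{con})\}$, which is exactly the ``in any case of mutual information'' clause of the lemma.

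I expect the main obstacle to be the first step: justifying that the final deterministic projection in equation~\eqref{eq:1} does not erase the entropy gained by fusion. Because a deterministic map can only decrease entropy by the data-processing inequality, one must either argue that the effective rank of the projection dominates the intrinsic dimension of $(\textbf{X}_{vit},\textbf{X}_{con})$ so that no information is lost, or else reinterpret the lemma as a statement about the pre-projection joint representation (equivalently, about the differential entropy of the concatenated features with appropriate volume corrections). Without such an assumption on the projection layer, a naive $H(\textbf{X}_{fused}) \geq H(\textbf{X}_{vit})$ bound is not automatic, and the entire argument hinges on this identification being valid in the relevant regime.
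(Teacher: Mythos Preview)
Your argument is correct and rests on the same core identity the paper uses, namely $H(\textbf{X}_{vit},\textbf{X}_{con}) = H(\textbf{X}_{vit}) + H(\textbf{X}_{con}) - I(\textbf{X}_{vit};\textbf{X}_{con})$, together with the informal claim that the Conv branch contributes information not recoverable from the ViT features. The organizational difference is that the paper, after writing this identity, proceeds by a three-way case split on whether $I(\textbf{X}_{vit};\textbf{X}_{con})$ is small, medium, or large and argues in each regime that a positive entropy gain survives; you instead invoke the chain rule $H(\textbf{X}_{vit},\textbf{X}_{con}) = H(\textbf{X}_{vit}) + H(\textbf{X}_{con}\mid\textbf{X}_{vit})$ and non-negativity of conditional entropy to get the inequality in one stroke, then isolate strictness as the single condition $H(\textbf{X}_{con}\mid\textbf{X}_{vit})>0$. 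Your route is more elementary and makes the equality case explicit, whereas the paper's case analysis is more discursive but lands on the same qualitative conclusion. You also flag a point the paper does not address: the deterministic $3\times 3$ projection after concatenation can only reduce entropy by data processing, so the lemma is really about the pre-projection joint representation; the paper implicitly adopts this identification without comment.
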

% ---------------------------------
\begin{proof}
As introduced in Section~3.2 of the main paper, the proposed MEA block can be viewed as an operation that integrates the ViT features (\ie, the Attn branch and the FFN branch) and the convolution features (\ie, the Conv branch). Therefore, we begin by formalizing the obtained features into the following two basic elements: the ViT features and the convolution features. To formalize the learning setting, we express the ViT features as $\textbf{X}_{vit}$ and the convolution features as $\textbf{X}_{con}$. It is evident that if $\textbf{X}_{vit}$ and $\textbf{X}_{con}$ are extracted from the same image, then $\textbf{X}_{vit}$ and $\textbf{X}_{con}$ are not independently distributed, and there exists some mutual information between them~\citep{zhang2022graph,wu2021cvt,zhang2023cae,peng2021conformer}. Therefore, the IE of the fused feature of $\textbf{X}_{vit}$ and $\textbf{X}_{con}$ within the MEA block can be expressed as:
% ---------------------------------------------------
\begin{equation}
\begin{split}
\label{eqs:1}
\textrm{H}(\textbf{X}_{vit}, \textbf{X}_{con}) = \textrm{H}(\textbf{X}_{vit}) + \textrm{H}(\textbf{X}_{con}) - \textrm{I}(\textbf{X}_{vit}; \textbf{X}_{con}),
\end{split}
\end{equation}
% ---------------------------------------------------
where $\textrm{H}(\cdot)$ is utilized to calculate the IE of the given variate, which can be formulated as:
% ---------------------------------------------------
\begin{equation}
\begin{split}
\label{eqs:2}
\textrm{H}(\textbf{X}_{vit}) = -\sum P(\textbf{x}_{vit}) log(P(\textbf{x}_{vit})),\\
\textrm{H}(\textbf{X}_{con}) = -\sum P(\textbf{x}_{con}) log(P(\textbf{x}_{con})),
\end{split}
\end{equation}
% ---------------------------------------------------
where $P(\textbf{x}_{vit})$ represents the probability of $\textbf{X}_{vit}$ taking on the value of $\textbf{x}_{vit}$. The similar definition of $P(\textbf{x}_{con})$. $\textrm{I}(\cdot;\cdot)$ in Eq.~\eqref{eqs:1} is used to compute the mutual information between $\textbf{X}_{vit}$ and $\textbf{X}_{con}$, which can be expressed as:
% ---------------------------------------------------
\begin{equation}
\begin{split}
\label{eqs:3}
\textrm{I}(\textbf{X}_{vit}; \textbf{X}_{con}) = \sum\sum \textrm{P}(\textbf{X}_{vit}, \textbf{X}_{con}) \textrm{log}(\textrm{P}(\textbf{X}_{vit}, \textbf{X}_{con}) (\textrm{P}(\textbf{X}_{vit}), \textrm{P}(\textbf{X}_{con}))),
\end{split}
\end{equation}
% ---------------------------------------------------
where $\textrm{P}(\textbf{X}_{vit}, \textbf{X}_{con})$ is their joint probability distribution. 
%\textrm{P}(\textbf{X}_{vit})$ and $\textrm{P}(\textbf{X}_{con})$ are the marginal probability distributions of $\textbf{X}_{vit}$ and $\textbf{X}_{con}$, respectively. 
Since $\textrm{I}(\textbf{X}_{vit}; \textbf{X}_{con})$ is always non-negative, $\textrm{H}(\textbf{X}_{vit}, \textbf{X}_{con})$ may still be greater than $\textrm{H}(\textbf{X}_{vit})$ or $\textrm{H}(\textbf{X}_{con})$~\citep{paninski2003estimation,gabrie2018entropy}. This suggests that the IE of the features extracted by MEA is always greater than the feature representation extracted by either of them separately.

Specifically, if $\textrm{I}(\textbf{X}_{vit}; \textbf{X}_{con})$ is small, the IE gain after fusion may still be significant, which is beneficial for improving the generalization capability and adaptability of the block. However, when $\textrm{I}(\textbf{X}_{vit}; \textbf{X}_{con})$ is large, the IE gain after fusion may be reduced. This means that $\textrm{I}(\textbf{X}_{vit}; \textbf{X}_{con})$ may affect the IE improvement of the fused model. Next, we will discuss the impact of $\textbf{X}_{vit}$ and $\textbf{X}_{con}$ on improving the IE of the adapter based on the size of $\textrm{I}(\textbf{X}_{vit}; \textbf{X}_{con})$, which can be divided into the following three cases:

\begin{itemize}
% --------------------------
\item {{Small} $\textrm{I}(\textbf{X}_{vit}; \textbf{X}_{con})$.} This is an ideal state. When the dependency between $\textbf{X}_{vit}$ and $\textbf{X}_{con}$ is small, it indicates that $\textrm{I}(\textbf{X}_{vit}; \textbf{X}_{con})$ is small, that is, $\textbf{X}_{vit}$ and $\textbf{X}_{con}$ respectively represent different information of the image. In this case, fusing $\textbf{X}_{vit}$ and $\textbf{X}_{con}$ can bring a significant increase in IE, which is beneficial to improving the adapter's generalization capability and adaptability.
% --------------------------
\item {{Medium} $\textrm{I}(\textbf{X}_{vit}; \textbf{X}_{con})$.} When $\textrm{I}(\textbf{X}_{vit}; \textbf{X}_{con})$ is between small and large, it indicates that there is a certain degree of correlation between them. In this case, fusing $\textbf{X}_{vit}$ and $\textbf{X}_{con}$ may still bring some IE gain. The specific improvement effect depends on the degree of correlation between $\textbf{X}_{vit}$ and $\textbf{X}_{con}$ and their complementarity in image representations. Fortunately~\citep{zhang2022graph,zhang2023cae,marouf2024mini,liu2023efficientvit}, a large amount of work has validated that ViT and convolutional layers can extract distinctive information from images. Therefore, in this case, fusing $\textbf{X}_{vit}$ and $\textbf{X}_{con}$ can still bring IE gains.
\item {{Large} \myparagraph{$\textrm{I}(\textbf{X}_{vit}; \textbf{X}_{con})$}.} When $\textrm{I}(\textbf{X}_{vit}; \textbf{X}_{con})$ between $\textbf{X}_{vit}$ and $\textbf{X}_{con}$ is large, it indicates that there is a high correlation between them, \ie, global ViT and local convolution features may represent similar or overlapping information of the image. In this case, the IE gain brought by fusing $\textbf{X}_{vit}$ and $\textbf{X}_{con}$ may decrease because there is a lot of information overlap between them. However, in our case, the probability of such a scenario occurring is almost non-existent, fusing $\textbf{X}_{vit}$ and $\textbf{X}_{con}$ may still improve the performance of the model to some extent, because they may capture the detailed information of the image to varying degrees.
% --------------------------
\end{itemize}
% --------------------------

Based on the aforementioned theoretical analysis, we can conclude that the proposed MEA block has a larger IE than existing ViT adapters (which are primarily based on the attention mechanism) under any scenario. This provides evidence that the MEA block has more comprehensive feature representations. 
% ---------------------------------
\end{proof}
% ---------------------------------
As the MEA block includes a parallel convolutional branch, it can better capture local inductive biases compared to the traditional ViT adapter, which mainly uses self-attention~\citep{hu2022lora,jie2023fact,chen2022vision,ma2024segment,luo2023forgery,shao2023deepfake,mercea2024time}. 
Therefore, the MEA block's feature space should be more capable of distinguishing different samples, resulting in a larger MMD value. 
Our MEA block's feature space is obtained by combining the attention branch, the feed-forward network branch, and the local convolutional branch, enabling it to capture both local and global inductive biases of the given image. 
In contrast, the traditional ViT adapter's feature space is mainly obtained through self-attention and may not be able to capture local features well. Therefore, according to the MMD theory~\citep{cheng2021neural,arbel2019maximum,wang2021rethinking}, we can conclude that if the MEA block's feature space is more discriminative than the traditional ViT adapter's feature space, then the MEA block's feature space is more suitable for adapter feature space and can better improve the model's generalization capability and adaptability.

% -------------------------------------------
\section{Introduction of the Experimental Datasets}
\label{secS2}
% -------------------------------------------
{\color{red}{\emph{This supplementary is for Section~4.1 of the main paper.}}}
In our paper, two representative datasets are used to evaluate the effectiveness and efficiency of our method, including MS-COCO~\citep{caesar2018coco} for ODet and ISeg, and ADE20K~\citep{zhou2017scene} for SSeg. Below are the details of the used datasets:

% -------------------------------
\begin{itemize}
% -------------------------------
\item MS-COCO~\citep{caesar2018coco} is a representative yet challenging dataset for common scene IS and object detection, which consists of $118$k, $5$k and $20$k images for the \emph{training} set, the \emph{val} set and the \emph{test} set, respectively. In our experiments, the model is trained on the \emph{training} set and evaluated on the \emph{val} set.
% -------------------------------
\item ADE20K~\citep{zhou2017scene} is a scene parsing dataset with $20$k images and $150$ object categories. Each image has pixel-level annotations for SS of objects and regions within the scene. The dataset is divided into $20$k, $2$k, and $3$k images for \emph{training}, \emph{val} and \emph{test}, respectively. Our model is trained on the \emph{training} set and evaluated on the \emph{val} set.
% -------------------------------
\end{itemize}
% -------------------------------
For data augmentation, random horizontal flip, brightness jittering and random scaling within the range of $[0.5, 2]$ are used in training as in~\citep{chen2022vision,luo2023forgery,zhang2023cae,mercea2024time}. By default, the inference results are obtained at a single scale, unless explicitly specified otherwise.

% -------------------------------------------
\section{Introduction of the Experimental Settings}
\label{secS3}
% -------------------------------------------
{\color{red}{\emph{This supplementary is for Section~4.2 of the main paper.}}} Experiments on object detection and instance segmentation are conducted using the open-source MMDetection framework~\citep{chen2019mmdetection}. The training batch size is set to $16$, and AdamW~\citep{loshchilov2017decoupled} is used as the optimizer with the initial learning rate of $1 \times 10^{-4}$ and the weight decay of $0.05$. The layer-wise learning rate decay is used and set to $0.9$, and the drop path rate is set to $0.4$. Following~\citep{xiong2024efficient,wang2021pyramid,chen2022vision,liu2022convnet}, to ensure a fair result comparison, we choose two training schedules, 1$\times$ (\ie, $12$ training epochs) and 3$\times$ (\ie, $36$ training epochs). For the 1$\times$ training schedule, images are resized to the shorter side of 800 pixels, with the longer side not exceeding $1,333$ pixels. In inference, the shorter side of images is consistently set to 800 pixels by default. For the 3$\times$ training schedule, the multi-scale training strategy is also used as in~\citep{chen2022vision}, and the shorter side is resized to $480$ to $800$ pixels, while the longer side remains capped at $1,333$ pixels.

{\color{red}{\emph{This supplementary is for Section~4.3 of the main paper.}}} Experiments on semantic segmentation are conducted using the MMSegmentation framework~\citep{mmseg2020}. The input images are cropped to a fix size of 512 $\times$ 512 pixels as in~\citep{xiong2024efficient,chen2022vision}. The training batch size is set to $16$, and AdamW~\citep{loshchilov2017decoupled} is used as the optimizer with the initial learning rate of $1 \times 10^{-5}$ and the weight decay of $0.05$. Following~\citep{li2022exploring,liu2021swin}, the layer-wise learning rate decay is set to $0.9$ and the drop path rate is set to $0.4$. We report the experimental results on both single scale training and multi-scale training strategies. 
% -------------------------------
\begin{table}[t]
\centering
\small
\renewcommand\arraystretch{1.2}
\setlength{\tabcolsep}{6pt}{
\begin{tabular}{r|r|ccl}
\hline \hline 
Methods & Pre-Trained & Params.$\downarrow$ & AP$^\textrm{m}$ $\uparrow$ \\
\hline 
Swin-B~\citep{liu2021swin} & ImageNet-1k~\citep{deng2009imagenet} & 107.1 &  43.3 \\
ViT-Adapter-B~\citep{chen2022vision} & ImageNet-1k~\citep{deng2009imagenet} & 120.2 & 43.6 \\
\cellcolor[gray]{.95}\textbf{META-B$_{{\textrm{(Ours)}}}$} & \cellcolor[gray]{.95}ImageNet-1k~\citep{deng2009imagenet} & \cellcolor[gray]{.95}115.3 & \cellcolor[gray]{.95}44.3$_{\color{red}{+0.7}}$ \\
\cdashline{1-4}[0.8pt/2pt]
Swin-B~\citep{liu2021swin} & ImageNet-22k~\citep{steiner2021train} & 107.1 & 44.3\\
ViT-Adapter-B~\citep{chen2022vision} & ImageNet-22k~\citep{steiner2021train} & 120.2 & 44.6 \\
\cellcolor[gray]{.95}\textbf{META-B$_{{\textrm{(Ours)}}}$} & \cellcolor[gray]{.95}ImageNet-22k~\citep{steiner2021train} & \cellcolor[gray]{.95}115.3  & \cellcolor[gray]{.95}45.2$_{\color{red}{+0.6}}$ \\
\cdashline{1-4}[0.8pt/2pt]
Swin-B~\citep{liu2021swin} & Multi-Modal~\citep{zhu2022uni} & 107.1 &   -- \\
ViT-Adapter-B~\citep{chen2022vision} & Multi-Modal~\citep{zhu2022uni} & 120.2  & 45.3 \\
\cellcolor[gray]{.95}\textbf{META-B$_{{\textrm{(Ours)}}}$} & \cellcolor[gray]{.95}Multi-Modal~\citep{zhu2022uni} & \cellcolor[gray]{.95}115.3  & \cellcolor[gray]{.95}45.9$_{\color{red}{+0.6}}$ \\
\hline \hline 
\end{tabular}
\caption{Result comparisons on Params. (\textbf{M}) and AP (\%) under different pre-trained weights with Mask R-CNN ($3 \times$ +MS schedule)~\citep{he2017mask} as the baseline model on the \emph{val} set of MS-COCO~\citep{caesar2018coco}. ``--'' denotes there is no such a result in its paper.}
\label{tab3}}
\end{table}
% -------------------------------

% -------------------------------------------
\section{Result Comparisons under Different Weights}
\label{secS4}
% -------------------------------------------
{\color{red}{\emph{This supplementary is for Section~4.2 of the main paper.}}} In this section, we present the experimental results of META on object detection and instance segmentation with different pre-trained weights and compare them with other state-of-the-art methods including SwinViT~\citep{liu2021swin} and ViT-Adapter~\citep{chen2022vision} as in~\citep{chen2022vision}. 
Mask R-CNN~\citep{he2017mask} is used as the baseline, and ViT-B~\citep{li2022exploring} is used as the backbone. The 3$\times$ training schedule with MS training strategy is used. The obtained experimental results are given in Table~\ref{tab3}.
From this table, we can observe that our method is applicable to different pre-trained weights (\ie, ImageNet-1k~\citep{deng2009imagenet}, ImageNet-22k~\citep{steiner2021train}, and Multi-Modal~\citep{zhu2022uni}), and achieves more accurate AP with fewer model parameters compared to ViT-Adapter~\citep{chen2022vision}, across different pre-trained weights.  

% -------------------------------------------
\section{More Ablation Study Results}
\label{secS5}
% -------------------------------------------
{\color{red}{\emph{This supplementary is for Section~4.4 of the main paper.}}} In our main paper, we present the experimental results of deploying adapters with Attn branch and FFN branch as components on ViT-B~\citep{li2022exploring}. It is noteworthy that the layer normalization operation has been shared between the Attn branch and the FFN branch to reduce the memory access costs associated with the normalization operations. In this section, we demonstrate a result comparison between the experimental results of using shared layer normalization operation and those of not using it in the traditional setting (\ie, the non-shared normalization). The obtained experimental results are shown in Table~\ref{tab:s1}. It can be observed that sharing layer normalization does not significantly improve the performance in terms of AP. However, compared to FPS, FLOPs, MC, our approach can achieve satisfactory performance gains.
% --------------------------
\begin{table*}[t]
\centering
\renewcommand\arraystretch{1.2}
\setlength{\tabcolsep}{1pt}{
\begin{tabular}{r|ccccc|ccccc}
\hline \hline 
Settings & ViT-B & Attn & FFN & Conv & Cascade & AP$^\textrm{m}$ $\uparrow$ & FPS$\uparrow$ & Params.$\downarrow$ & FLOPs$\downarrow$ & MC$\downarrow$ \\
\hline 
Baseline model & \cmark & \xmark & \xmark & \xmark & \xmark & 41.3 & 11.5 & 113.6\textbf{M} & 719\textbf{G} & NA\\
\cdashline{1-11}[0.8pt/2pt]
\cellcolor[gray]{.95}Shared normalization & \cmark & \cmark & \cmark & \xmark & \xmark & \cellcolor[gray]{.95}43.4 & \cellcolor[gray]{.95}11.3 & \cellcolor[gray]{.95}114.4\textbf{M} & \cellcolor[gray]{.95}719\textbf{G} & \cellcolor[gray]{.95}7.5\textbf{GB}\\
Non-shared normalization & \cmark & \cmark & \cmark & \xmark & \xmark & 43.2 & 10.5 & 114.4\textbf{M} & 737\textbf{G} & 8.8\textbf{GB}\\
\hline \hline 
\end{tabular}
\caption{Ablation study results on shared layer normalization.}
\label{tab:s1}}
\end{table*}
% --------------------------

% -------------------------------
{\color{red}{\emph{This supplementary is for Section~4.4 of the main paper.}}} META is proposed as a simple and fast ViT adapter by minimizing inefficient memory access operations. In this section, we compare META with other efficient attention methods and advanced adapter methods~\citep{marouf2024mini,xia2022vision,sung2022vl}. All methods are used with their default settings and the same settings as the injector and extractor in ViT-adapter~\citep{chen2022vision}. Following the same setup as in~\citep{chen2022vision}, the attention mechanism is utilized as the ViT-adapter layer. Therefore, during the experimental comparisons, we replace the attention mechanism in the ViT-adapter with alternative attention mechanisms to ensure a fair comparison. 
The obtained experimental results are given in Table~\ref{tab6}. We can observe that compared to these methods, META achieves new state-of-the-art performance in both accuracy and efficiency. We ultimately achieve an AP of $44.3\%$ with $115.3$\textbf{M} parameters, $720$\textbf{G} FLOPs, $17.4$ FPS, and 8.1 \textbf{GB} MC. 
% -------------------------------
\begin{table}[t]
\centering
\footnotesize
\renewcommand\arraystretch{1.2}
\setlength{\tabcolsep}{5pt}{
\begin{tabular}{r|ccccc}
\hline \hline 
Methods & AP$\uparrow$ & FPS$\uparrow$ & Params. (\textbf{M})$\downarrow$ & FLOPs (\textbf{G})$\downarrow$  & Momory (\textbf{GB})$\downarrow$ \\
\hline 
WindowAtt~\citep{liu2021swin} & 41.2 & 11.6 & 145.0 & 982 & 18.5 \\
PaleAttention~\citep{wu2022pale} & 42.8 & 14.4 & 155.2 & 1,029 & 16.7\\
Attention~\citep{vaswani2017attention} & 43.1 & 5.2 & 188.4 & 1,250 & 18.3 \\
CSWindow~\citep{dong2022cswin}& 43.1 & 13.7 & 144.6 & 990 & 12.9\\
SimplingAtte~\citep{he2023simplifying} & 43.3 & 12.2 & 126.3 & 994 & 17.1\\
DeformableAtt~\citep{xia2022vision} & 43.7 & 13.5 & 166.0 & 988 & 15.2 \\
\cdashline{1-6}[0.8pt/2pt]
MiniAdapters~\citep{marouf2024mini} & 41.9 & 15.0 & 131.8 & 995 & 12.2 \\
VL-Adapter~\citep{sung2022vl} & 42.7 & 14.5 & 167.2 & 993  & 14.0\\
\cellcolor[gray]{.95}\textbf{META-B$_{{\textrm{(Ours)}}}$} & \cellcolor[gray]{.95}44.3 & \cellcolor[gray]{.95}17.4 & \cellcolor[gray]{.95}115.3 & \cellcolor[gray]{.95}720 & \cellcolor[gray]{.95}8.1\\
\hline \hline 
\end{tabular}
\caption{Result comparisons with different adapters.}
\label{tab6}}
\end{table}
% -------------------------------

% -------------------------------------------
\section{Visualizations under the Conv branch}
\label{secS6}
% -------------------------------------------
{\color{red}{\emph{This supplementary is for Section~3.2 of the main paper.}}} In this section, to observe if the adapter has learned local inductive biases through the Conv branch, we visualize the model's class activation maps. The obtained visualizations are given in Figure~\ref{figs1}. From this figure, it can be observed that after adding the Conv branch, the model focuses more on the specific object area (\eg,`` the dog'' and ``the person'') rather than the surrounding area that may extend beyond the object itself, as was the case before adding the Conv branch. This indicates that our method effectively learns local inductive biases after incorporating the Conv branch.
% -------------------------------------------
\begin{figure}[t]
\centering
\includegraphics[width=.6\textwidth]{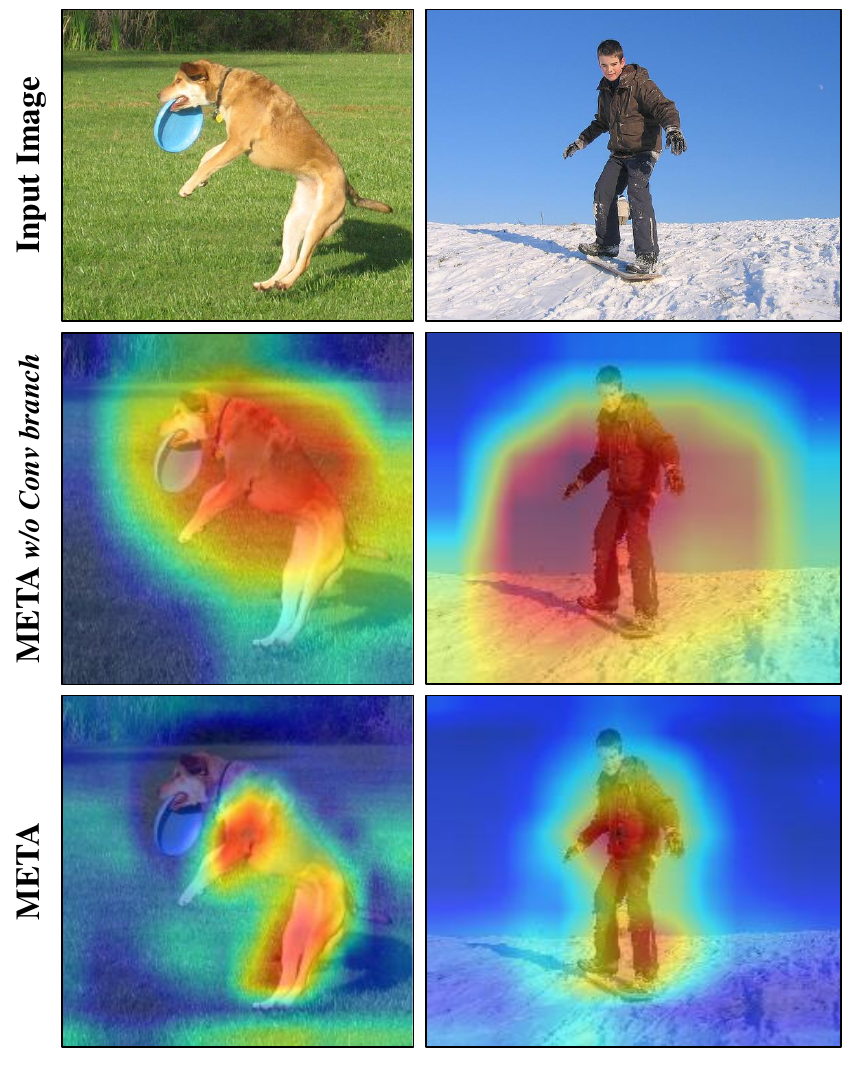}
\vspace{-4mm}
\caption{Class activation map comparisons of instance segmentation before and after adding the Conv branch. The sample images are from the \emph{training} set of MS-COCO~\citep{caesar2018coco}.}
\vspace{-4mm}
\label{figs1}
\end{figure}
% -------------------------------------------

% -------------------------------------------
\section{Qualitative Visualization results}
\label{secS7}
% -------------------------------------------
{\color{red}{\emph{This supplementary is for Section~4.2 and~4.3 of the main paper.}}} In this section, we show qualitative results on both instance segmentation and semantic segmentation. To demonstrate the superiority of our method, we present visualization results of ablation studies on instance segmentation, as well as comparisons with state-of-the-art methods on both instance segmentation and semantic segmentation. 
The obtained visualization results are shown in Figure~\ref{figs2}. From the results, it can be observed that compared to other methods, our method can achieve more accurate object masks that better fit the actual boundaries of the objects themselves.
% -------------------------------------------
\begin{figure*}[t]
\centering
\includegraphics[width=1\textwidth]{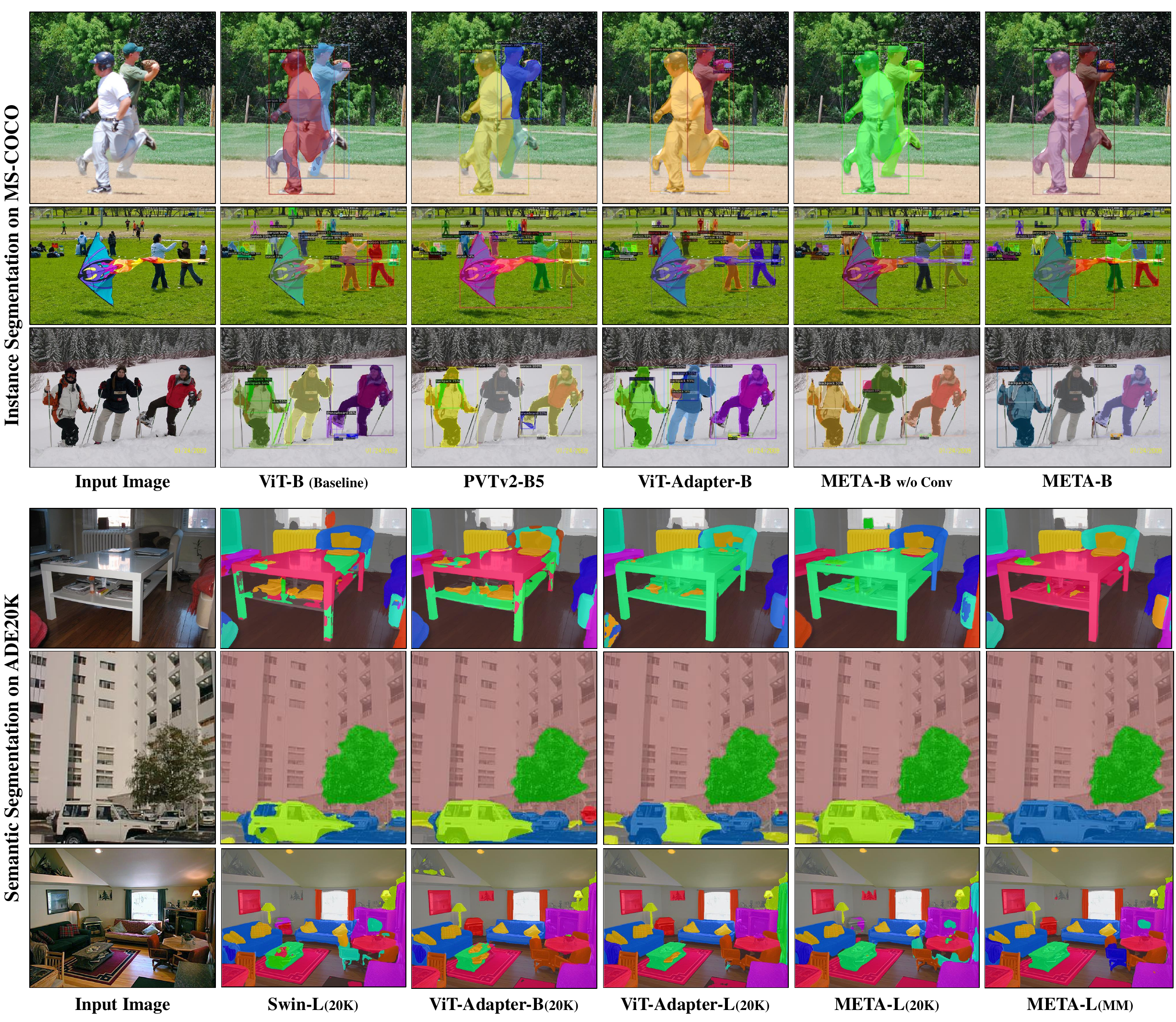}
\vspace{-6mm}
\caption{Qualitative results. The sample images are from the \emph{val} set of MS-COCO~\citep{caesar2018coco} for instance segmentation, and are from the \emph{val} set of ADE20K~\citep{zhou2017scene} for semantic segmentation.
``w/o Conv'' denotes that the Conv branch is not used in the experiments.
``20K'' and ``MM'' refers to the backbone network being pre-trained on ImageNet-22k~\citep{steiner2021train} and Multi-Modal~\citep{zhu2022uni}, respectively.}
\label{figs2}
\end{figure*}
% -------------------------------------------

as well as the pseudo-code for when the stripe size is set to $2$ in Section~\ref{secS8}. 
% -------------------------------------------
\section{Pseudo-code fo stripe size = $2$}
\label{secS8}
% -------------------------------------------
In this code snippet, stripe size is set to 2, and relevant features are directly obtained using the gather function instead of reshaping them with img2windows. This operation can reduce unnecessary reshaping operations and improves the efficiency of the code.
% -------------------------------------------
\begin{python}
function cross_shaped_window_attention(x, num_heads, window_size):
    # x: given feature
    # num_heads: head number
    # window_size: window size

    # Get dimensions
    (batch_size, seq_length, d_model) = shape(x)

    # Split into multiple heads
    Q, K, V = split_heads(x, num_heads)

    # Initialize attention output
    attention_output = zeros(batch_size, seq_length, d_model)

    # Initialize previous head's output for cascaded attention
    previous_Q = zeros(batch_size, seq_length, d_model)
    previous_K = zeros(batch_size, seq_length, d_model)
    previous_V = zeros(batch_size, seq_length, d_model)

    # Calculate attention for each head
    for head in range(num_heads):
        for position in range(seq_length):
            # Get cross-shaped window indices
            window_indices = get_cross_shaped_window_indices(position, window_size)

            # Gather Q, K, V for the current window
            Q_window = gather(Q[head], window_indices)
            K_window = gather(K[head], window_indices)
            V_window = gather(V[head], window_indices)

            # Incorporate previous head's output for cascaded attention
            if head > 0:
                Q_window += previous_Q
                K_window += previous_K
                V_window += previous_V

            # Calculate attention scores
            attention_scores = softmax(Q_window * K_window^T / sqrt(d_k))

            # Compute the attention output for the current position
            attention_output[position] = attention_scores * V_window

        # Update previous head's output for the next head
        previous_Q = Q_window
        previous_K = K_window
        previous_V = V_window

    # Final linear transformation
    attention_output = linear_transform(attention_output)
    return attention_output

function feed_forward_network(x):
    # Feed Forward Network
    x = ReLU(linear(x))
    x = linear(x)
    return x
\end{python}

\begin{python}
def get_cross_shaped_window_indices(position, window_size, seq_length):
    # Initialize the list of indices
    indices = []

    # Add the current position
    indices.append(position)

    # Add vertical neighbors (up and down)
    for offset in range(-window_size, window_size + 1):
        if position + offset >= 0 and position + offset < seq_length:
            indices.append(position + offset)

    # Remove duplicates and sort the indices
    indices = list(set(indices))
    indices.sort()

    return indices
\end{python}
% ----------------------------------------------

% ----------------------------------------------
\end{document}